\documentclass[11pt]{article}
\usepackage[utf8]{inputenc}
\usepackage[T1]{fontenc}
\usepackage{amsmath,amssymb,amsthm}
\usepackage{algorithm}
\usepackage{algpseudocode}
\usepackage{booktabs}
\usepackage{tabularx}
\usepackage{array}
\usepackage{graphicx}
\usepackage{caption}
\usepackage{hyperref}
\usepackage{geometry}
\usepackage{comment}
\usepackage{color}
\usepackage{subcaption}
\usepackage{authblk}

\newtheorem{theorem}{Theorem}

\newtheorem{proposition}{Proposition}
\newtheorem{corollary}{Corollary}

\title{Optimization Techniques for Solving the Minimum Red--Blue Separating Set Problem with an Application to Human--AI Text Separation}

\author[1,2]{Marko Djukanovi\'c\thanks{Corresponding author:\texttt{marko.dukanovic@ung.si}}}
\author[3]{Sta\v{s}a Vuji\v{c}i\'c Stankovi\'c \thanks{\texttt{stasa.vujicic.stankovic@matf.bg.ac.rs}}}
\author[2]{\v{Z}iga Zebec\thanks{\texttt{ziga.zebec@izum.si}}}
\author[4]{Christian Blum\thanks{\texttt{christian.blum@csic.es}}}

\affil[1]{University of Nova Gorica, Nova Gorica, Slovenia}
\affil[2]{Institute of Information Sciences (IZUM), Maribor, Slovenia}
\affil[3]{Faculty of Mathematics, University of Belgrade, Belgrade, Serbia}
\affil[4]{Artificial Intelligence Research Institute (IIIA-CSIC), Barcelona, Spain}

\date{}

\newcommand{\qns}{\mathit{qns}}

\begin{document}
	\maketitle

	\begin{abstract}
		We study the so-called Minimum Red–Blue Separating Set Problem (MRBSSP) which arises from computational geometry. In particular, we propose a compact Set-Cover ILP formulation, safe preprocessing rules, a Greedy Randomized Adaptive Search Procedure (GRASP), and two exact approaches: one is based on Logic-Based Benders Decomposition (LBBD) approach, and the second represents a novel Branch-and-Benders-Cut (BBC) approach. These two exact approaches are compared to ILP--alongside GRASP and the Variable Neighborhood Search (VNS) from literature. Experiments on 250 random graphs with up to 500 vertices reveal complementary strengths: ILP yields the best feasible solutions, LBBD provides the strongest lower bounds and smallest mean optimality gap, and BBC certifies optimality most frequently. We further introduce an NLP application distinguishing human- and AI-generated texts through vocabulary-based graph features. On 42 instances, BBC achieves the smallest mean optimality gap, certifying 16 optima, while GRASP outperforms VNS. Structural analysis of optimal vocabulary separators identifies 113 mandatory and 151 replaceable features across 272 tests, highlighting substantial variability among optimal solutions. Overall, our results demonstrate the effectiveness of complementary optimization strategies and the interpretability of optimal vocabulary-based discrimination.
	\end{abstract}

	\section{Introduction}
	\label{sec:introduction}
	
	Separating objects of different types by a small number of selected
	features, tests, or sensors is a recurring theme in combinatorial
	optimization, data analysis, and network design~\cite{balas1976set,acid2013algorithm,alber2004geometric,krak2019data,liu2019separate}. In graph settings, this
	principle leads naturally to separation problems in which the goal is not
	merely to dominate or cover vertices, but to distinguish pairs of vertices
	belonging to different classes. The Minimum Red-Blue Separating Set Problem
	(MRBSSP) captures this requirement in a particularly compact form. Given a
	graph $G=(V,E)$ and two disjoint sets of terminals, red vertices $R$ and
	blue vertices $B$, the task is to select a minimum-cardinality subset
	$S\subseteq V$ such that every red-blue pair $(r,b)\in R\times B$ is
	separated by at least one selected vertex. A vertex $v\in V$ separates the
	pair $(r,b)$ if its closed neighborhood $N[v]$ contains exactly one of
	$r$ and $b$. Thus, each selected vertex acts as a local test: it
	distinguishes precisely those terminal pairs for which one endpoint is
	inside its closed neighborhood and the other is outside.
	
	MRBSSP can be viewed as a structured Set-Cover problem~\cite{caprara2000algorithms,hochbaum1997approximating,christofides1975note}. The universe of
	elements to be covered is the Cartesian product $R\times B$, and each
	candidate vertex $v\in V$ covers the subset of red-blue pairs that it
	separates. The natural integer programming formulation therefore contains
	one binary variable for each vertex and one covering constraint for each
	red-blue terminal pair. This formulation is conceptually simple and results in
	a direct exact model, but it can become large even for moderate instances,
	since the number of constraints grows with $|R||B|$. Moreover, many of these
	constraints may be irrelevant for a large part of the search, because a
	candidate solution typically violates only a small subset of all red-blue
	pairs. This observation is the starting point for the decomposition and
	cut-generation methods~\cite{ralphs2006decomposition,saharidis2010improving} studied in this paper.
	
	The problem is closely related to several established lines of research.
	At the most general level, MRBSSP inherits the modeling structure and
	computational difficulty of Set Cover and Hitting Set~\cite{vinterbo2000minimal,Gainer2017minimal}, classical
	NP-hard problems for which integer programming, greedy heuristics, and
	approximation algorithms form the standard methodological toolkit
	\cite{Feige1998}. It is also related to Red-Blue Set Cover, introduced by
	Carr et al.~\cite{CarrEtAl2000}, where the objective is to cover all blue
	elements while minimizing the number of red elements covered. Although
	MRBSSP has a different graph-based separation semantics, both problems
	share the same red-blue modeling philosophy: one must satisfy conditions
	involving one color class while controlling how the other color class is
	affected. More directly, MRBSSP is connected to the Red-Blue Separation
	problem on graphs studied by Dev et al.~\cite{DevEtAl2023}, where a small
	set of vertices is selected so that every red-blue pair is distinguished
	by the closed neighborhood of at least one selected vertex. That work
	establishes the graph-theoretic and complexity foundations of the problem,
	including hardness results and approximation guarantees for the general
	case. Complementing this theoretical line, Milojkovi\'c~\cite{Milojkovic2025}
	recently proposed a Variable Neighborhood Search (VNS)~\cite{MladenovicHansen1997} heuristic for the
	MRBSSP, showcasing that neighborhood-based
	metaheuristics can provide competitive feasible solutions for graph
	instances. The present paper builds on this emerging computational
	perspective by focusing on exact and hybrid methods, using the VNS method from the literature~\cite{Milojkovic2025} and the newly introduced GRASP as
	strong heuristic reference baselines while emphasizing certification and
	dual-bound progress.
	
	From an algorithmic perspective, MRBSSP is particularly well suited to
	decomposition. Once a candidate set $S$ of selected vertices is fixed,
	assessing feasibility reduces to a simple set-intersection test: for every
	red-blue pair $(r,b)$, one verifies whether $S$ contains at least one
	vertex that separates the pair. If such a pair is not separated, the
	corresponding Set-Cover constraint is violated and can be added as a cut.
	This structure is a natural match for Logic-Based Benders Decomposition
	(LBBD)~\cite{HookerOttosson2003}, where a master problem proposes a candidate solution and a logical
	subproblem either verifies feasibility or returns violated constraints. It is also a natural match for
	Branch-and-Benders-Cut (BBC), where the same feasibility oracle is embedded
	inside a single branch-and-bound tree through lazy constraint generation~\cite{Pearce2019}, in the spirit of the combinatorial Benders cuts of Codato and Fischetti~\cite{CodatoFischetti2006}.
	The essential distinction is that iterative LBBD repeatedly solves a
	sequence of master problems, whereas BBC preserves the global search tree
	and injects cuts only when they are needed.
	
	The motivation for this study is therefore twofold. First, the direct ILP
	model for MRBSSP is easy to state but may be inefficient because it builds
	the full $|R||B|$ constraint family in advance. Second, purely heuristic
	methods such as GRASP or VNS can rapidly produce
	good feasible solutions, but they do not provide the dual bounds nor the optimality
	certification. This creates a natural
	methodological question: can the structure of MRBSSP be exploited to obtain
	exact methods that retain the certification strength of integer programming
	while avoiding unnecessary constraint generation and benefiting from
	high-quality heuristic incumbents?
	
	We investigate this question on two deliberately different benchmark
	families. The first is a broad \textsc{Random} benchmark composed of synthetically
	generated graphs with varying sizes and densities. Its role is to stress the
	algorithms under controlled structural variation and to permit a systematic
	comparison of primal performance, dual-bound progress, optimality gaps,
	certification rates, and heuristic stability. The second is an NLP-derived
	benchmark in which human- and AI-generated texts are represented through
	vocabulary-incidence graphs. In this family, the selected separator has a
	direct feature-selection interpretation: a minimum set of vocabulary
	features is sought that distinguishes every human--AI text pair. The NLP
	instances therefore complement the synthetic study with a real-data setting
	in which both computational performance and the structure of the resulting
	optimal separators can be interpreted.
	The two benchmark families serve different but complementary purposes.
	
	%\textsc{Random} instances provide scale, controlled difficulty, and broad algorithmic
	%stress testing, whereas the NLP instances evaluate whether the same
	%methodology remains effective on structured real-data graphs and whether the
	%selected optimal vertices admit a meaningful application-level
	%interpretation. The methods studied in this paper are therefore assessed not
	%	only by objective values and running-time behavior, but also by lower-bound
	%strength, optimality certification, repeated-run heuristic stability, and,
	%for the NLP benchmark, the rigidity and replaceability of optimal vocabulary
	%features.\\
	
	To the best of our knowledge, this work is among the earliest systematic exact
	computational studies of MRBSSP. It establishes optimality certificates for
	benchmark instances previously treated primarily by heuristic methods and
	compares three complementary exact strategies against two metaheuristic
	baselines across both synthetic and real-data benchmark families. The main
	contributions are as follows.
	\begin{itemize}
		\item We present a unified exact-optimization framework for MRBSSP based on
		the Set-Cover interpretation of red--blue pair separation.
		\item We formulate and compare three exact strategies, which, to the best of our knowledge, are the first ones developed for MRBSSP: a GRASP-warm-started ILP, an iterative LBBD method, and a BBC implementation based on lazy separation cuts within a single branch-and-bound tree.
		\item We develop a GRASP construction and local-improvement procedure used
		both as a standalone heuristic and as a source of high-quality incumbents for
		the exact methods, and compare it with a VNS baseline from the literature.
		\item We show how MRBSSP feasibility can be checked by a simple logical
		oracle that detects uncovered red--blue pairs and generates valid Set-Cover
		cuts, with only a limited number of heuristically selected cuts added per LBBD iteration or BBC
		callback invocation.
		\item We conduct a broad computational study on a \textsc{Random} benchmark of 250
		graphs spanning multiple sizes and densities. This benchmark is used for
		systematic algorithmic stress testing, including primal quality,
		lower-bound progress, optimality gaps, certification behavior, parameter
		tuning, and repeated-run stability of GRASP and VNS.
		\item We introduce a second, structurally different benchmark family derived
		from NLP vocabulary-separation instances for distinguishing human- and AI-generated texts, providing both a
		real-data computational benchmark and an interpretable feature-selection
		application of MRBSSP. For the NLP benchmark, we go beyond objective-value comparison and
		analyze the structure of certified optimal separators by forbidding selected
		vertices one at a time. This distinguishes mandatory vocabulary features
		that occur in every optimum from replaceable features that belong only to some optimal solutions.
	\end{itemize}
	
	\subsection{Notation and Problem Restatement}
	\label{sec:notation}
	
	Let $G = (V, E)$ be an undirected graph with disjoint vertex subsets
	$R \subseteq V$ (red terminals) and $B \subseteq V$ (blue terminals),
	$R \cap B = \emptyset$. For $v \in V$, let $N[v] = N(v) \cup \{v\}$ denote
	the closed neighborhood of $v$. A vertex $v$ \emph{separates} a red-blue
	pair $(r,b) \in R \times B$ if
	\[
	\bigl| N[v] \cap \{r, b\} \bigr| = 1,
	\]
	i.e., $v$'s closed neighborhood contains exactly one of the two terminals.
	For each pair $(r,b)$ define the \emph{coverage set}
	\[
	D(r,b) \;=\; \bigl\{\, v \in V \;:\; |N[v] \cap \{r,b\}| = 1 \,\bigr\}.
	\]
	A subset $S \subseteq V$ is a \emph{feasible red-blue separator} if it
	intersects $D(r,b)$ for every pair $(r,b) \in R \times B$, i.e.
	$S \cap D(r,b) \neq \emptyset \ \forall (r,b)$. The MRBSSP asks for a
	feasible separator of minimum cardinality:
	\[
	\text{MRBSSP:} \qquad \min_{S \subseteq V} \; |S|
	\quad \text{s.t.} \quad S \cap D(r,b) \neq \emptyset \ \ \forall (r,b) \in R \times B.
	\]
	This is structurally a Set-Cover / Hitting Set problem in which the
	``elements to be covered'' are the $|R|\times|B|$ red-blue pairs and the
	``sets'' are the per-vertex incidences induced by $D(\cdot,\cdot)$. \\

	The remainder of the paper is organized as follows.
	Section~\ref{sec:methods} introduces the classical Set-Cover ILP formulation
	and the preprocessing rules used by the exact approaches.
	Section~\ref{sec:grasp} presents the GRASP heuristic and its role in providing
	validated warm starts and upper bounds. Sections~\ref{sec:lbbd} and~\ref{sec:bbc}
	present the LBBD and BBC algorithms, respectively.
	Section~\ref{sec:experiments} reports the first benchmark study on the \textsc{Random}
	graph family, including parameter tuning and comparisons of primal quality,
	lower-bound strength, optimality gaps, certification behavior, and
	repeated-run heuristic stability.
	Section~\ref{sec:nlp-benchmark} presents the second benchmark family, derived
	from NLP vocabulary-separation instances for distinguishing human- and
	AI-generated texts. In addition to comparing all five approaches, it analyzes
	the structure of certified optimal vocabulary separators in terms of
	mandatory and replaceable features.
	Section~\ref{sec:conclusions} concludes the paper by synthesizing the evidence from these two benchmark
	families and outlining algorithmic and application-level directions for
	future work.

	\subsection{Classical ILP (Set-Cover) Formulation}
	\label{sec:ilp}
	
	Let define a binary decision variable $x_v \in \{0,1\}$ for every $v \in V$,
	with $x_v = 1$ if and only if $v \in S$, where $S$ denotes a solution of the tackled problem. The direct ILP formulation $\mathcal{M}$ is given as follows:
	\begin{align}
		&\min \quad  \sum_{v \in V} x_v \label{eq:ilp-obj}\\
		&\text{s.t.} \quad \sum_{v \in D(r,b)} x_v \;\geq\; 1
		\qquad \forall (r,b) \in R \times B, \label{eq:ilp-cover}\\
		& x_v \in \{0,1\} \qquad \forall v \in V. \label{eq:ilp-bin}
	\end{align}
	Constraint family~\eqref{eq:ilp-cover} is the bottleneck of this
	formulation: it contains $|R|\times|B|$ rows, many of
	which are \emph{never tight} at an optimal (or even near-optimal) solution
	on sparse, large instances, yet all of them must be generated, stored, and
	processed by the presolver and the branch-and-cut engine before the first
	relaxation is even solved. This motivates decomposing the model so that
	covering constraints are generated \emph{only when violated}, which is
	exactly the role played by the Benders-style methods of
	Sections~\ref{sec:lbbd}--\ref{sec:bbc}.
	
	Pre-processing (kernelization) can shrink~\eqref{eq:ilp-cover} 
	before the model is passed to a solver. A pair with $|D(r,b)| = 1$ forces
	its unique covering vertex into every feasible solution (an
	\emph{essential} vertex); rows whose coverage set is a strict superset of
	another row's are dominated and can be dropped, representing the \textit{row} dominance filtering. This is implemented in the function \textsc{RowDominanceReduction}(), returning non-dominated set of constraints of the basic model $\mathcal{M}$.  A vertex whose set of covered pairs is a strict subset of another surviving
	vertex's coverage is called dominated and can be removed, representing the \textit{column} dominance filtering.  This is implemented in the function \textsc{ColumnDominanceReduction}(), returning non-dominated set of vertices that are kept from the basic model $\mathcal{M}$.
	These are the classical data reduction rules for set covering, whose
	effect on large real-world instances was demonstrated by
	Weihe~\cite{Weihe1998}.  The reductions are applied iteratively. Let $V^{(k)}$ denote the currently
	surviving vertex set and $C^{(k)}$ the set of surviving constraints comprised of the variables from the set $V^{(k)}$, to keep the reduced instance internally consistent. Formally, the column coverage is computed over surviving vertices, i.e., 
	$C(v)=\{(r,b): v\in D(r,b)\cap V^{(k)}\}$, 
	and after every column-removal step each remaining separation set is
	explicitly synchronized with the current vertex set,
	\begin{equation}
		D(r,b) \leftarrow D(r,b)\cap V^{(k)}.
		\label{eq:safe-kernel-sync}
	\end{equation}
	Thus a vertex removed in an earlier dominance round cannot remain as a
	stale element of a later $D(r,b)$ set. As a remark, in the implementation, the coverage
	loop therefore iterates over $D(r,b)\cap V^{(k)}$, not over the unreduced
	set $D(r,b)$.
	
	Before any heuristic warm start is constructed, a full feasibility sanity
	check is performed. If some red--blue pair has $D(r,b)=\emptyset$, the
	instance is declared \textit{infeasible} and GRASP, presented in Section~\ref{sec:grasp} or any other effective procedure is not called. For feasible
	instances, any approximate methods $\mathcal{H}$ may provide a MIP start and an objective cutoff. The pseudocode of solving model $\mathcal{M}$ is presented in Algorithm~\ref{alg:ilp}.  % but only
	%when the returned heuristic separator passes the full MRBSSP feasibility
	%check. Hence an infeasible heuristic %andidate can never introduce an
	%invalid upper-bound constraint that may exclude the true optimum.  % The same
	%safeguards are applied in ILP, LBBD, and BBC.
	
	\begin{algorithm}[h]
		\caption{Classical Set-Cover ILP for MRBSSP with preprocessing and warm start}
		\label{alg:ilp}
		\begin{algorithmic}[1]
			\Require Graph $G=(V, E)$, terminal sets $R, B$, Heuristic procedure $\mathcal{H}$ to generate approximate solutions
			\Ensure Optimal separator $S^{\ast}$ or infeasibility certificate
			\State Compute $D(r,b)$ for every $(r,b) \in R \times B$
			\If{$D(r,b)=\emptyset$ for some $(r,b)$}
			\State \Return \textsc{Infeasible} \Comment{sanity check before GRASP}
			\EndIf
			\State $Ess \gets \{v:\exists(r,b),\,D(r,b)=\{v\}\}$ \Comment{essential vertices}
			\State  $V^{(k)} \gets$ \textsc{ColumnDominanceReduction}($\mathcal{M}$)
			\State $C^{(k)} \gets$ \textsc{RowDominanceReduction}( $\mathcal{M}$)
			\State $D(r,b)\gets D(r,b)\cap V^{(k)}$
			\State $\mathcal{M}^{'} \gets \left(\sum_{v \in V^{(k)}} x_v, C^{(k)},  \{ x_v \in \{0, 1\} \colon  v \in V^{(k)} \} \right)$ 
			\State $S^{\text{H}}\gets$ \Call{$\mathcal{H}$}{$G,R,B$} \Comment{Run an approximate method (e.g., GRASP from Section~\ref{sec:grasp})}
			\State $UB\gets | S^{\text{H}} |$; enable ILP start from $S^{\text{H}}$
			\State Fix essential vertices in $\mathcal{M}^{'}: \text{include } x_v = 1, v \in Ess$, $\sum_v x_v\le UB$
			\State $S^{\ast}\gets$ \Call{Solve}{$\mathcal{M}^{'}$} \Comment{Pass the model to a solver}
			\State \Return $S^{\ast}$
		\end{algorithmic}
	\end{algorithm}
	
	\section{GRASP Approach for the MRBSSP}
	\label{sec:grasp}
	
	The both designed exact approaches from
	Section~\ref{sec:lbbd} and  Section~\ref{sec:bbc} are seeded with an initial incumbent produced by a
	Greedy Randomized Adaptive Search Procedure (GRASP)~\cite{FeoResende1995}. This section describes
	the GRASP approach, since its constructive and local-search
	phases directly determine the quality of the warm start and, in turn, the
	upper-bound cut~\eqref{eq:ub-cut} available to the exact methods.
	
	\subsection{Overview}
	
	GRASP alternates between (i) a
	\emph{randomized greedy construction} phase that builds a feasible
	separator from scratch, biased by a randomized restricted candidate list
	so that successive iterations explore different regions of the solution
	space, and (ii) a \emph{local search} phase that greedily shrinks the
	constructed separator while preserving feasibility. The best feasible
	separator found before the applicable stopping criterion is reached is
	returned as $S^{\text{GRASP}}$. The resulting
	separator is supplied
	as a MIP start or used to define the upper-bound cutoff (see Eq.~\eqref{eq:ub-cut} in Section~\ref{sec:bbc}).

	During the execution time, GRASP repeatedly performs randomized	greedy construction followed by local improvement and retains the best feasible separator encountered. Pseudocode of the GRASP procedure is given in Algorithm~\ref{alg:grasp}.  Consequently, the GRASP solution actually	supplied to \textsc{BBC} and \textsc{LBBD} as a warm start is
	\emph{not} the same solution as the standalone $S^{\text{GRASP}}$ reported
	in the results table: the former results from only 10 iterations, whereas
	the latter is the best solution obtained during a 3600-second search. This
	distinction is essential for interpreting Section~\ref{sec:exp-grasp}
	correctly and is revisited there.
	
	\begin{algorithm}[h]
		\caption{GRASP for MRBSSP}
		\label{alg:grasp}
		\begin{algorithmic}[1]
			\Require Graph $G=(V,E)$, terminal sets $R,B$, candidate-list size $l_{\text{size}}>0$ 
			\Ensure Heuristic separator $S^{\text{GRASP}}$
			\State $S^{\text{GRASP}} \gets \emptyset$; \ $\mathit{cost}^{S^{\text{GRASP}}} \gets \infty$
			\While{stopping criterion has not been reached}
			\State $S^{'} \gets$ \Call{RandomizedGreedyConstruct}{$G,R,B,l_{\text{size}}$} \Comment{Section~\ref{sec:grasp-construct}}
			\State $S^{'} \gets$ \Call{LocalSearch}{$G,R,B,S^{'}$} \Comment{Section~\ref{sec:grasp-ls}}
			\If{$|S^{'}| <  \mathit{cost}^{S^{\text{GRASP}}}$}
			\State $S^{\text{GRASP}} \gets S'$; \ $\mathit{cost}^{S^{\text{GRASP}}} \gets |S'|$
			\EndIf
			\EndWhile
			\State \Return $S^{\text{GRASP}}$
		\end{algorithmic}
	\end{algorithm}
	
	Unlike the exact methods of Sections~\ref{sec:lbbd}--\ref{sec:bbc},
	GRASP carries no optimality certificate: it returns a feasible separator
	and its size, but no lower bound. Its role in this work is purely as a
	fast primal heuristic that (a) supplies a strong starting incumbent and
	(b) determines the right-hand side of the upper-bound
	cut~\eqref{eq:ub-cut} that tightens the BBC and LBBD master problems.
	
	\subsection{Randomized Greedy Construction}
	\label{sec:grasp-construct}
	
	The construction phase builds a separator one vertex at a time. At each
	step it maintains the set of red-blue pairs not yet separated by the
	partial solution and greedily selects a vertex that covers many of them,
	but---to allow GRASP's iterations to diversify rather than collapse onto
	a single deterministic greedy solution---the selection is randomized
	among the top-$l_{\text{size}}$ scoring candidates at each step (a restricted candidate
	list of size $l_{\text{size}}$) rather than always taking the single best-scoring
	vertex. %This is the constructive mechanism invoked as
	%\texttt{randomized\_greedy\_separator(G, R, B, k)} in the implementation;
	The parameter $l_{\text{size}}$ controls the trade-off between greedy quality (small
	$l_{\text{size}}$, close to deterministic best-first selection) and diversification
	(larger $l_{\text{size}}$, more randomness across GRASP iterations). The construction
	terminates once every pair in $R \times B$ is covered, which is guaranteed
	to occur in at most $|V|$ steps whenever the instance is separable.
	
	\subsection{Local Search: 1-Removal}
	\label{sec:grasp-ls}
	
	The local-search phase (LS)
	attempts to shrink the constructed separator $S'$ without ever leaving
	the feasible region. %It precomputes, once, the coverage map
	%$\texttt{pair\_cover\_map}(r,b) = D(r,b)$ for every pair (Section~\ref{sec:notation})
	%and the current set of covered pairs induced by $S_0$. 
	It repeatedly
	scans the vertices currently in the solution and tests, for each
	$v \in S$, whether removing $v$ leaves every pair still covered ---
	equivalently, whether $v$ is not the \emph{sole} remaining cover for any
	pair, i.e.,\ $S \cap D(r,b) \neq \{v\}$ for every $(r,b)$. The first vertex
	found to be removable is dropped, the covered-pairs bookkeeping is updated
	incrementally, and the scan restarts on the reduced solution (a
	first-improvement, restart-on-success strategy). The procedure terminates
	when a full scan finds no removable vertex, at which point $S$ is
	\emph{1-removal-minimal}: no single vertex can be deleted without
	uncovering some pair. The pseudocode of the procedure is shown in Algorithm~\ref{alg:ls}.
	
	\begin{algorithm}[h]
		\caption{Local Search (1-Removal) for MRBSSP}
		\label{alg:ls}
		\begin{algorithmic}[1]
			\Require Graph $G=(V,E)$, terminal sets $R,B$, feasible separator $S'$
			\Ensure 1-removal-minimal feasible separator $S$
			%\State Precompute $D(r,b)$ for every $(r,b) \in R \times B$ \Comment{\texttt{build\_pair\_cover\_map}}
			\State $S \gets S'$; \ $\mathit{Covered} \gets \{(r,b) : S \cap D(r,b) \neq \emptyset\}$
			\State $\mathit{improved} \gets \mathbf{true}$
			\While{$\mathit{improved}$}
			\State $\mathit{improved} \gets \mathbf{false}$
			\For{each $v \in S$} \Comment{Scan vertices in a fixed order}
			\State $\mathit{Covered}' \gets \mathit{Covered} \setminus \{(r,b) : S \cap D(r,b) = \{v\}\}$
			\Comment{pairs only $v$ covers}
			\If{$\mathit{Covered}' = R \times B$} \Comment{still fully covered without $v$}
			\State $S \gets S \setminus \{v\}$; \ $\mathit{Covered} \gets \mathit{Covered}'$
			\State $\mathit{improved} \gets \mathbf{true}$
			\State \textbf{break} \Comment{restart the scan on the reduced solution}
			\EndIf
			\EndFor
			\EndWhile
			\State \Return $S$
		\end{algorithmic}
	\end{algorithm}

	\section{Logic-Based Benders Decomposition (LBBD): Core Methodological Approach}
	\label{sec:lbbd}
	
	The central methodological idea of this work is to recognize that
	MRBSSP does not require all covering constraints~\eqref{eq:ilp-cover} to be
	present explicitly in order to define a correct master problem. A correct
	exact method requires only (a) a master problem that proposes candidate
	separators of minimum cardinality and (b) a procedure that can certify
	whether a candidate separator is feasible and, if not, explain
	\emph{why} it is infeasible in a form that can be fed back to the master
	problem. This is precisely the structure of \textit{Logic-Based Benders
		Decomposition} (LBBD): classical Benders decomposition~\cite{Benders1962}
	derives cuts from LP duality applied to a continuous subproblem, whereas
	LBBD derives cuts from \emph{logical inference} applied to a combinatorial
	feasibility subproblem. For MRBSSP, this subproblem reduces to a simple
	set-intersection test, which makes LBBD particularly well suited to this
	class of problems.
	
	\paragraph{Master Problem}
	
	The LBBD master problem $\mathcal{M}^{\text{LBBD}}$ retains only the decision variables and the
	objective from the ILP of Section~\ref{sec:ilp}. In its initial form, it
	contains the essential-vertex fixing constraints but \emph{no} red-blue
	covering constraints:
	\begin{align}
		\min \quad & \sum_{v \in V^{\text{nd}}} x_v \label{eq:master-obj}\\
		\text{s.t.} \quad & x_v = 1 \qquad \forall v \in Ess
		\quad \text{(essential vertices)}\label{eq:master-essential}\\
		& x_v \in \{0,1\} \qquad \forall v \in V^{\text{nd}}, \label{eq:master-bin}
	\end{align}

	where the set $V^{\text{nd}}$ represents the set of non-dominated candidate vertices obtained by  applying
	row- and column-dominance reductions on the  considered instance. The resulting reduced separation coverage and surviving candidate vertices are then passed to the LBBD procedure.
	
	As shown in Algorithm~\ref{alg:lbbd}, the implementation may pass a GRASP
	separator $S^{\text{GRASP}}$ to the MIP solver at each master solve. It is used
	as a MIP start and yields the valid cutoff
	$\sum_{v\in V^{\text{nd}}}x_v\le |S^{\text{GRASP}}|$ (equivalently, after fixing the
	essential vertices,
	$\sum_{v\in V^{\text{nd}}\setminus Ess}x_v\le |S^{\text{GRASP}}|-|Ess|$). 
	
	At each iteration $k$, solving the current relaxed master using this GRASP
	warm start yields a candidate $x^{(k)}$, which induces the candidate
	separator
	$S^{(k)} = \{v : x_v^{(k)} = 1, v \in V^{\text{nd}}\}$.
	Because only a limited number of covering constraints are present during
	the early iterations, $S^{(0)}$ is typically the empty set (or consists
	only of the essential vertices) unless the solver accepts the supplied
	incumbent during the search. The master objective value remains a valid
	\emph{lower bound} on the true optimum at every iteration, since the master
	contains only valid constraints and a valid incumbent cutoff.
	
	\paragraph{Subproblem (Logical Feasibility Oracle)}
	
	Given a candidate $S^{(k)}$, the LBBD subproblem is \emph{not} an LP, but
	rather a logical feasibility check:
	\[
	\text{Is } S^{(k)} \cap D(r,b) \neq \emptyset
	\quad \text{for every } (r,b) \in R\times B \,?
	\]
	This test decomposes completely over pairs and can be evaluated for all
	pairs in $O(|R||B|)$ time, given precomputed (kernelized) $D(r,b)$ sets.
	These sets can themselves be computed once in
	$O(|R||B||V|)$ preprocessing time and reused across all iterations and all 
	considered algorithms. For a maximally $q_{\text{LBBD}}$ pairs found to be \emph{uncovered} by $S^{(k)}$, the
	logic-based Benders feasibility cut
	\begin{equation}
		\sum_{v \in D(r,b)} x_v \;\geq\; 1
		\label{eq:benders-cut}
	\end{equation}
	is added into the master problem $\mathcal{M}^{\text{LBBD}}$. These inequalities simply
	restate the necessary condition that the pair $(r,b)$ must be separated
	by at least one selected vertex. Note that this feasibility cut is
	syntactically identical to the corresponding row~\eqref{eq:ilp-cover} of
	the classical ILP. The distinction is that LBBD generates such rows
	dynamically, only when they are violated by a master solution, rather than
	including the complete family of $|R||B|$ constraints a priori.
	
	\paragraph{Controlled Multi-Cut Strategy}
	
	At iteration $k$, the oracle first identifies the complete set
	$\mathit{Violated}$ of pairs not separated by the current candidate.
	Instead of adding every corresponding row at once, the implementation
	selects a bounded subset
	$\mathit{Selected} \subseteq \mathit{Violated} $ with
	$|\mathit{Selected}|\leq q_{\mathrm{LBBD}}$, where
	$q_{\mathrm{LBBD}}$ is a user-defined maximum number of cuts per iteration.
	The default ranking prioritizes pairs with the smallest separation sets
	$|D(r,b)|$, since these rows are typically more restrictive and cheaper to
	process. When $q_{\mathrm{LBBD}}=1$, the procedure reduces to a single-cut
	scheme, whereas a sufficiently large value recovers the all-cuts strategy.
	This controlled multi-cut policy prevents the first relaxed master
	solution from immediately generating the entire reduced ILP, while
	preserving exactness because at least one violated valid cut is appended
	whenever the current candidate is infeasible.
	
	\begin{algorithm}[h]
		\caption{Iterative Logic-Based Benders Decomposition for MRBSSP}
		\label{alg:lbbd}
		\begin{algorithmic}[1]
			\Require Graph $G=(V,E_G)$, terminal sets $R,B$, cut limit
			$q_{\mathrm{LBBD}}\ge1$, $l_{\text{size}}\ge 1$
			\Ensure Optimal separator $S^{\ast}$ or infeasibility certificate
			\State Compute $D(r,b)$ for every $(r,b)\in R\times B$
			\If{$D(r,b)=\emptyset$ for some pair}
			\State \Return \textsc{Infeasible}
			\EndIf
			\State Compute essential vertices; kernelize and compute non-dominated
			constraints $C^{\text{nd}}$ and vertices $V^{\text{nd}}$
			\State Synchronize $D(r,b)\gets D(r,b)\cap V^{\text{nd}}$
			\State $S^{\text{GRASP}}\gets$
			\Call{GRASP}{$G,R,B,l_{\text{size}}$}
			\State Build the kernelized master $\mathcal{M}$ and add its MIP start
			and cutoff
			\State $k\gets0$
			\Repeat
			\State $x^{(k)}\gets$ \Call{Solve}{$\mathcal{M}^{\text{LBBD}}$, $S^{\text{GRASP}}$}
			\Comment{warm start/cutoff, if available}
			\State $S^{(k)} \gets \{ v \in V : x_v^{(k)} = 1 \}$
			\State $\mathit{Violated} \gets
			\bigl\{\, (r,b) \in R\times B :
			S^{(k)} \cap D(r,b) = \emptyset \,\bigr\}$
			\State $\mathit{Selected} \gets$
			\Call{SelectCuts}{$\mathit{Violated},q_{\mathrm{LBBD}}$}
			\Comment{at most $q_{\mathrm{LBBD}}$ pairs; smallest $|D(r,b)|$ first}
			\For{each $(r,b) \in \mathit{Selected}$}
			\State Add cut $\sum_{v \in D(r,b)} x_v \geq 1$ to  $\mathcal{M}^{\text{LBBD}}$
			\EndFor
			\State $k \gets k+1$
			\Until{$\mathit{Violated} = \emptyset$}
			\State $S^{\ast} \gets S^{(k-1)}$
			\State \Return $S^{\ast}$
		\end{algorithmic}
	\end{algorithm}
	
	The complete LBBD procedure is shown in Algorithm~\ref{alg:lbbd}.
	
	\subsection{Correctness and Finite Termination}
	
	\begin{proposition}
		\label{prop:lbbd-correct}
		Algorithm~\ref{alg:lbbd} terminates after a finite number of iterations
		and returns an optimal solution to MRBSSP.
	\end{proposition}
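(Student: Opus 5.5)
Assuming the instance passes the sanity check of Algorithm~\ref{alg:lbbd}, the argument has three parts: (i) the kernelized master with all covering rows is equivalent to MRBSSP, (ii) finite termination, and (iii) optimality at termination.

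\textbf{Equivalence of the reduced model.} Let $\mathcal{F}$ be the family of covering rows $\sum_{v\in D(r,b)}x_v\ge 1$, $(r,b)\in R\times B$, with $D(r,b)$ synchronized to $V^{\text{nd}}$. I will show three things.
\begin{itemize}
\item Fixing the essential vertices is valid, since any pair with $D(r,b)=\{v\}$ forces $v$ into every feasible separator.
\item Column dominance preserves the optimal value. If $v$ is removed because $C(v)\subsetneq C(u)$, replacing $v$ by $u$ in any feasible separator keeps it feasible and does not increase its cardinality. Iterating over the reduction rounds, which is safe because of the synchronization~\eqref{eq:safe-kernel-sync}, shows that some optimal separator lies in $V^{\text{nd}}$.
\item Row dominance only drops implied rows. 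Such rows are satisfied whenever the dominating row is, so they need no special treatment.
\end{itemize}
The incumbent cutoff $\sum x_v\le|S^{\text{GRASP}}|$ removes no optimal solution, because $S^{\text{GRASP}}$ is feasible. Hence
\[
\min\Bigl\{\textstyle\sum_{v\in V^{\text{nd}}}x_v : x\in\{0,1\}^{V^{\text{nd}}},\ \text{Ess fixed},\ \text{cutoff},\ \mathcal{F}\Bigr\}=\mathrm{OPT}.
\]

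\textbf{Finite termination.} At every iteration the master contains a subset $\mathcal{F}_k\subseteq\mathcal{F}$ of these rows. If $S^{(k)}$ is infeasible, then $\mathit{Violated}\neq\emptyset$ and at least one cut is selected, because $q_{\mathrm{LBBD}}\ge1$. Each selected cut is violated by $x^{(k)}$, so it is not already in $\mathcal{F}_k$ (that would contradict the feasibility of $x^{(k)}$ in the master). Thus $|\mathcal{F}_k|$ strictly increases, and since $|\mathcal{F}|\le|R||B|$ the loop stops after at most $|R||B|+1$ master solves. If every row is eventually present, the master is the full reduced ILP, its optimum is feasible, and the loop ends.

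I also need every master solve to return a solution. Each master is feasible because the GRASP separator (restricted or mapped to $V^{\text{nd}}$) satisfies all of $\mathcal{F}$ and the cutoff. It has an optimum because it is a finite pure binary program.

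\textbf{Optimality.} Every master is a relaxation of the equivalent model above, so its optimal value $z_k$ satisfies $z_k\le\mathrm{OPT}$. At termination $\mathit{Violated}=\emptyset$, so $S^{(k-1)}$ is a feasible separator with $|S^{(k-1)}|=z_{k-1}\le\mathrm{OPT}$, and it is therefore optimal. One small point: the objective sums only over $V^{\text{nd}}$, so I should use $S^{(k)}\subseteq V^{\text{nd}}$ consistently, reading $x_v=0$ for vertices outside $V^{\text{nd}}$.

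\textbf{Main obstacle.} The delicate step is the validity of the preprocessing. The subtlety is that the column-dominance exchange argument must use coverage measured with respect to the currently surviving rows and vertices. If dominated rows are removed before dominated columns, I must check that a vertex dominated on the reduced row set is still replaceable for the dropped rows. I would handle this by arguing via the dominating row: if $D(r,b)\supsetneq D(r',b')$ and the replacement $u$ covers $(r',b')$, then $u$ also covers $(r,b)$. Induction over the reduction rounds then extends the exchange argument to all rows.
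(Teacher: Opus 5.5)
Your proposal is correct and follows the same skeleton as the paper's proof: termination holds because every non-final iteration adds at least one new row from a family of at most $|R||B|$ covering rows, and optimality holds because each master is a relaxation, so the first candidate with $\mathit{Violated}=\emptyset$ is feasible with value at most $\mathrm{OPT}$. You are more careful than the paper in two places. First, you show that a selected cut is new because $x^{(k)}$ is feasible in the current master, whereas the paper appeals to an implementation that tracks added cuts. Second, you prove that restricting to $V^{\text{nd}}$, fixing $Ess$ and imposing the GRASP cutoff preserve the optimum, which the paper's claim that the master is ``a relaxation of the full ILP'' silently assumes. The one loose phrase is ``restricted'' for the GRASP start, since $S^{\text{GRASP}}\cap V^{\text{nd}}$ need not be feasible; your equivalence step already gives master feasibility, so nothing else is needed.
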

	
	\begin{proof}
		\textbf{(Finite termination.)}
		There are exactly $|R||B|$ distinct pairs and, hence, at most
		$|R||B|$ distinct cuts of the form~\eqref{eq:benders-cut} can ever be
		generated. Each pair induces at most one distinct cut. Even if a pair is
		violated again by a later candidate solution, its corresponding cut is
		generated at most once in an implementation that explicitly tracks
		previously added cuts. Since $V$ is finite, after $k$ iterations the
		feasible region of the master problem is a subset of
		$\{0,1\}^{|V|}$ intersected with at most $k$ additional linear
		inequalities. Once all $|R||B|$ possible cuts have been added in the
		worst case, the master becomes identical to the full ILP of
		Section~\ref{sec:ilp}, and any optimal solution to it is directly
		feasible. Therefore, the loop must terminate within at most
		$|R||B|+1$ iterations.
		
		\textbf{(Optimality.)}
		At every iteration $k$, $\mathcal{M}^{\text{LBBD}}$ is a relaxation of the full
		ILP~\eqref{eq:ilp-obj}--\eqref{eq:ilp-bin}: it contains a strict subset
		of the covering constraints, together with the valid essential-vertex
		fixing~\eqref{eq:master-essential}. Hence,
		$z(\mathcal{M}^{\text{LBBD}}) \leq z^{\ast}$ for every $k$, where $z^{\ast}$
		is the true MRBSSP optimum and $z(\mathcal{M}^{\text{LBBD}})$ is the optimal
		value of the master problem at iteration $k$, and therefore a valid
		lower bound. The loop terminates only when $S^{(k)}$ satisfies
		\emph{every} covering constraint~\eqref{eq:ilp-cover}, i.e.,
		$\mathit{Violated} = \emptyset$. This means that $S^{(k)}$ is feasible
		for the full ILP and, therefore,
		$|S^{(k)}| \geq z^{\ast}$. Combined with
		$z(\mathcal{M}^{\text{LBBD}}) = |S^{(k)}| \leq z^{\ast}$, since $S^{(k)}$ is
		the master's own optimal solution at that iteration, we obtain
		$|S^{(k)}| = z^{\ast}$, i.e., $S^{(k)}$ is optimal.
	\end{proof}
	
	This argument is the logic-based analogue of the classical Benders
	feasibility-cut argument: instead of a dual certificate proving that a
	continuous subproblem is infeasible or suboptimal, the certificate here is
	the combinatorial fact
	$S^{(k)} \cap D(r,b) = \emptyset$, which logically implies the cut.
	
	\paragraph{Limitation of the Iterative Scheme}
	
	Algorithm~\ref{alg:lbbd} re-solves the master MIP \emph{from scratch} at
	every iteration (the call
	\textsc{Solve}$(\mathcal{M},S^{\text{GRASP}})$. Although the GRASP warm
	start and cutoff are re-applied at each solve, the branch-and-bound tree
	explored in the previous iteration is discarded after each new batch of
	cuts is appended. On instances requiring many iterations, this repeated
	restart overhead can dominate the total running time, motivating the
	tighter integration described in the next section.

	\section{Branch-and-Benders-Cut (BBC)}
	\label{sec:bbc}
	
	Branch-and-Benders-Cut eliminates the restart overhead of
	Algorithm~\ref{alg:lbbd} by embedding the same logical feasibility oracle
	directly within a \emph{single} branch-and-bound search tree through a
	\textit{lazy constraint callback}. Rather than solving the master to full
	MIP optimality, identifying violations, and re-solving it, BBC intercepts
	the search when an integer-feasible candidate is found, checks it
	immediately, and either accepts it or injects the missing cuts into the
	\emph{same} tree before the search continues. The pseudocode of BBC is
	shown in Algorithm~\ref{alg:bbc-master}.
	
	\paragraph{Master/Callback Architecture}
	
	The BBC master problem is identical in structure to the LBBD master
	(Equations~\eqref{eq:master-obj}--\eqref{eq:master-bin}). The algorithm
	first checks whether any red--blue pair has an empty separation set; if so,
	it terminates with an infeasibility certificate. Otherwise, it runs GRASP,
	and the resulting solution is used as a MIP start and to define the valid
	upper-bound cutoff
	\begin{equation}
		\sum_{v \in V^{\text{nd}} \setminus Ess} x_v \;\leq\;
		|S^{\text{GRASP}}| - |Ess|.
		\label{eq:ub-cut}
	\end{equation}
	
	The master is solved \emph{once}, but the branch-and-bound engine invokes
	the $\textsc{FeasibilityOracle}$ callback
	(see Algorithm~\ref{alg:bbc-callback}) whenever it discovers a new
	integer-feasible candidate solution, \emph{before} that candidate is
	accepted as the incumbent. At each callback invocation, the oracle adds at
	most $q_{\mathrm{BBC}}$ lazy constraints rather than all currently
	violated rows. The implemented ranking prioritizes pairs with the smallest
	$|D(r,b)|$. Since even a single violated lazy constraint excludes the
	current candidate, this policy remains exact while avoiding the immediate
	reconstruction of the complete reduced ILP at the first callback.
	
	\begin{algorithm}[h]
		\caption{Branch-and-Benders-Cut for MRBSSP --- Master Driver}
		\label{alg:bbc-master}
		\begin{algorithmic}[1]
			\Require Graph $G=(V, E)$, terminal sets $R,B$, cut limit
			$q_{\mathrm{BBC}}\ge1$, $l_{\text{size}}\ge 1$
			\Ensure Optimal separator $S^{\ast}$ or infeasibility certificate
			\State Compute $D(r,b)$ for every $(r,b)\in R\times B$
			\If{$D(r,b)=\emptyset$ for some pair}
			\State \Return \textsc{Infeasible}
			\EndIf
			\State $S^{\text{GRASP}}\gets$
			\Call{GRASP}{$G,R,B,l_{\text{size}}$}
			\State Compute essential vertices; kernelize and synchronize
			$D(r,b)\gets D(r,b)\cap V^{\text{nd}}$
			
			\State Build master $\mathcal{M}$ with
			objective~\eqref{eq:master-obj},  kernelization, and essential nodes
			\State Add cutoff~\eqref{eq:ub-cut} and register the MIP start from
			$S^{\text{GRASP}}$
			\State Register callback
			$\textsc{FeasibilityOracle}(q_{\mathrm{BBC}})$ to be invoked for every
			integer-feasible candidate
			\State $S^{\ast} \gets$
			\Call{BranchAndBound}{$\mathcal{M}$,$S^{\text{GRASP}}$}
			\Comment{single search tree; never restarted}
			\State \Return $S^{\ast}$
		\end{algorithmic}
	\end{algorithm}
	
	Whenever the solver encounters an integer-feasible master candidate, the
	callback reconstructs the complete candidate separator, including all
	essential vertices fixed to 1 during preprocessing. Thus,
	\[
	S^{(k)} =
	Ess \cup \{v\in V^{\text{nd}}\setminus Ess : x_v^{(k)}=1\}.
	\]
	The feasibility oracle then checks this complete separator against all
	surviving red--blue separation requirements.
	
	\begin{algorithm}[h]
		\caption{$\textsc{FeasibilityOracle}$ --- Lazy Constraint Callback}
		\label{alg:bbc-callback}
		\begin{algorithmic}[1]
			\Require Current integer-feasible candidate values $x^{(k)}$ at a
			search node; essential-vertex set $Ess$; cut limit
			$q_{\mathrm{BBC}}$
			\State $S^{(k)} \gets
			Ess \cup \{v\in V^{\text{nd}}\setminus Ess : x_v^{(k)}=1\}$
			\State $\mathit{Violated} \gets
			\bigl\{(r,b)\in R\times B :
			S^{(k)} \cap D(r,b)=\emptyset\bigr\}$
			\If{$\mathit{Violated} = \emptyset$}
			\State \textbf{accept} $x^{(k)}$ as a valid incumbent
			\Comment{no lazy constraint is required}
			\Else
			\State $\mathit{Selected} \gets$
			\Call{SelectCuts}{$\mathit{Violated},q_{\mathrm{BBC}}$}
			\Comment{at most $q_{\mathrm{BBC}}$ pairs; smallest $|D(r,b)|$ first}
			\For{each $(r,b) \in \mathit{Selected}$}
			\State Add lazy feasibility cut
			$\sum_{v \in D(r,b)} x_v \geq 1$ to the active search tree
			\EndFor
			\Comment{The current candidate violates the newly added cut(s) and is rejected by the solver.}
			\EndIf
		\end{algorithmic}
	\end{algorithm}
	
	\paragraph{Why BBC Preserves the Search Tree}
	
	The crucial structural difference from Algorithm~\ref{alg:lbbd} is that
	$\textsc{FeasibilityOracle}$ does not stop or restart the branch-and-bound
	process; instead, it is invoked \emph{synchronously} within the solver's
	incumbent-acceptance routine. Adding a cut violated by the current
	candidate is sufficient for the solver to treat that candidate as
	infeasible. The search then continues within the same branch-and-bound
	tree, with the newly added valid inequalities remaining active throughout
	the remainder of the search and being inherited by all subsequently
	explored nodes, including siblings and descendants of already processed
	nodes that may benefit from the new cut through bound tightening. This is
	precisely what eliminates the restart overhead identified as the
	limitation of Section~\ref{sec:lbbd}: LBBD restarts the branch-and-bound
	search after each new batch of cuts, whereas BBC maintains a single search
	tree throughout.
	
	\subsection{Correctness}
	
	\begin{theorem}
		\label{thm:bbc-correct}
		Algorithm~\ref{alg:bbc-master}
		(with callback~\ref{alg:bbc-callback})
		terminates and returns an optimal solution to MRBSSP.
	\end{theorem}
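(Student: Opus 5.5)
The plan follows the structure of the proof of Proposition~\ref{prop:lbbd-correct}. The difference is that the master is solved once, as a single branch-and-bound run, with lazy constraints added in the callback, rather than as a sequence of master solves. I would split the argument into three parts: validity of the reduced master, finite termination of the single search, and optimality of the returned incumbent.

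\textbf{Validity.} First I would check that nothing in Algorithm~\ref{alg:bbc-master} removes all optimal solutions of the full ILP~\eqref{eq:ilp-obj}--\eqref{eq:ilp-bin}.
\begin{itemize}
\item Essential vertices belong to every feasible separator, since $D(r,b)=\{v\}$ forces $x_v=1$.
\item Row dominance drops only constraints that are implied by a kept constraint.
\item Column dominance removes a vertex $v$ only when some surviving $u$ covers a superset of $v$'s pairs. Replacing $v$ by $u$ in an optimal solution therefore preserves feasibility and does not increase cardinality. Some optimum consequently survives the reduction and the synchronization~\eqref{eq:safe-kernel-sync}.
\item The cutoff~\eqref{eq:ub-cut} is valid because $S^{\text{GRASP}}$ is feasible, so $z^\ast\le|S^{\text{GRASP}}|$.
\end{itemize}
Every lazy cut~\eqref{eq:benders-cut} is a row of~\eqref{eq:ilp-cover}, so it is satisfied by every feasible separator. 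Hence, at every moment of the search, the master with its currently active lazy cuts is a relaxation of the reduced ILP that still contains an optimal solution. Its bound is therefore a valid lower bound on $z^\ast$.

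\textbf{Termination.} There are finitely many binary vectors over $V^{\text{nd}}$ and at most $|R||B|$ distinct lazy cuts. Each callback invocation on an infeasible candidate adds at least one violated cut that was not yet active. A cut already present cannot be violated by a candidate that the solver submits, because the candidate satisfies all active constraints. So only finitely many rejections can occur. Once all violated rows have appeared, the master coincides with the reduced ILP, and a standard branch-and-bound over finitely many binaries with a finite constraint set terminates. Between cut additions the tree is finite as well. Cuts are inherited by the existing tree rather than triggering a restart, but each addition only shrinks the feasible region, so the total work remains finite.

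\textbf{Optimality.} The callback accepts an incumbent only if $\mathit{Violated}=\emptyset$. Every incumbent is therefore a genuinely feasible separator once the essential vertices are re-included, as in the reconstruction of $S^{(k)}$. When branch-and-bound terminates, it has proved that no point of the final master has objective below the incumbent's value. The final master is a relaxation of the reduced ILP, and by the validity step the reduced ILP has optimal value $z^\ast$, so the incumbent value is at most $z^\ast$. Feasibility gives the reverse inequality, hence equality.

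One subtle case needs care. If the GRASP start is never improved, the solver must still return it. It is feasible and was accepted as the MIP start through the same callback check.

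\textbf{Main obstacle.} The delicate step is the interaction between lazy cuts and the nodes already pruned before the cuts were added. Pruning by bound remains valid because later cuts only raise bounds. Pruning by integrality happens only after the oracle accepts the candidate. So no node is discarded on the basis of an infeasible incumbent. I would state this as the standard correctness contract of lazy-constraint callbacks, as in~\cite{CodatoFischetti2006}, rather than reprove it.
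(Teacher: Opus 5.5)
Your proof is correct and follows the paper's route. Termination comes from the finite binary domain together with a monotonically growing pool of at most $|R||B|$ distinct lazy cuts, and optimality comes from two inequalities: the oracle-accepted incumbent is feasible ($|S^\ast|\ge z^\ast$), and the final master is a relaxation ($|S^\ast|\le z^\ast$). Your extra validity step is a useful refinement, because the paper simply calls the final master a relaxation of the full ILP. After column-dominance removal it is strictly a relaxation only of a reduced ILP, and your step verifies exactly that this reduced ILP still contains an optimum satisfying~\eqref{eq:ub-cut}.
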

	
	\begin{proof}
		Branch-and-bound over the finite domain
		$\{0,1\}^{|V \setminus E|}$ terminates in finite time regardless of how
		many lazy cuts are injected during the search, since each cut only
		\emph{removes} candidate solutions and the underlying search tree
		remains finite. This is a standard property of lazy-constraint
		branch-and-cut: although at most $q_{\mathrm{BBC}}$ cuts are added per
		callback invocation, every infeasible integer candidate triggers at
		least one valid cut. The lazy-constraint pool therefore grows
		monotonically and is bounded above by $|R||B|$ distinct rows, exactly
		as argued in Proposition~\ref{prop:lbbd-correct}.
		
		For optimality, observe that the solver reports an incumbent
		$x^{\ast}$ as final only if (a) it is provably optimal with respect to
		every constraint currently present in the model and (b)
		$\textsc{FeasibilityOracle}$ finds no violated pair for it. Otherwise,
		a new cut would be added and $x^{\ast}$ would be excluded by that cut,
		contradicting (a). Condition (b) means that
		$S^{\ast} = \{v : x^{\ast}_v = 1\}$ satisfies
		$S^{\ast} \cap D(r,b) \neq \emptyset$ for \emph{every}
		$(r,b) \in R \times B$. Thus, $S^{\ast}$ is feasible for the full
		ILP~\eqref{eq:ilp-obj}--\eqref{eq:ilp-bin}, and therefore
		$|S^{\ast}| \geq z^{\ast}$. Conversely, every cut that is added is a
		logically valid inequality
		(Equation~\eqref{eq:benders-cut} restates a necessary condition for
		feasibility), so the relaxed model solved at termination remains a
		valid relaxation of the full ILP, yielding
		$|S^{\ast}| \leq z^{\ast}$ by the same relaxation argument used in
		Proposition~\ref{prop:lbbd-correct}. Hence,
		$|S^{\ast}| = z^{\ast}$.
	\end{proof}
	
	\begin{corollary}
		LBBD (Algorithm~\ref{alg:lbbd}) and BBC
		(Algorithm~\ref{alg:bbc-master}) are equivalent in terms of the cut
		family they may generate: both are restricted to the same universe of
		at most $|R||B|$ valid inequalities~\eqref{eq:benders-cut} and
		therefore solve to the same optimal objective value $z^{\ast}$ on any
		instance. They differ only in \emph{when} and \emph{how} those cuts are
		discovered and exploited during the search, which affects running time
		and the number of explicitly materialized constraints, but not solution
		quality or correctness.
	\end{corollary}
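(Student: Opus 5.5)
The plan is to derive the corollary directly from Proposition~\ref{prop:lbbd-correct} and Theorem~\ref{thm:bbc-correct}, supplemented by a short inspection of which inequalities each algorithm can append to its master. The argument splits into two claims. The first is that the cut families coincide. The second is that the returned objective values coincide. A closing remark will explain that the remaining differences are purely procedural.

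\textbf{Cut families.} I will read off the cut-generation steps of Algorithm~\ref{alg:lbbd} and of callback~\ref{alg:bbc-callback}. In both, a cut is appended only for a pair $(r,b)\in\mathit{Selected}\subseteq R\times B$, and the cut is exactly $\sum_{v\in D(r,b)}x_v\ge 1$ with the same synchronized set $D(r,b)\cap V^{\text{nd}}$. Each cut is therefore determined by its pair. The set of possible cuts in either method is the image of a map from $R\times B$ onto the rows of \eqref{eq:benders-cut}, so it has at most $|R||B|$ elements. The universe is identical for both methods, and each member is valid, since it restates a necessary condition for feasibility.

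The other ingredients are also the same in both methods: the essential-vertex fixings \eqref{eq:master-essential} and the GRASP cutoff \eqref{eq:ub-cut}. Both are valid for every optimal separator. For the cutoff this holds because $|S^{\text{GRASP}}|\ge z^\ast$. For the essential fixings it holds because any pair with $|D(r,b)|=1$ forces its vertex. I will also note that column dominance only removes dominated vertices, so the optimum value is preserved.

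\textbf{Equal optimal value.} Proposition~\ref{prop:lbbd-correct} gives that LBBD returns a separator of cardinality $z^\ast$. Theorem~\ref{thm:bbc-correct} gives the same for BBC. The two values therefore agree. It is worth stressing the consequence: the two methods may return \emph{different} optimal separators, but the objective values must coincide. Solution quality is thus identical. The only things that can differ are the order in which cuts are discovered (after full master solves versus at integer incumbents inside a single tree) and how many rows are materialized. Both of these affect only running time and model size.

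\textbf{Main obstacle.} The delicate point is making sure that the preprocessing does not change $z^\ast$ in a way that differs between the two algorithms. In Algorithm~\ref{alg:bbc-master}, GRASP is called before kernelization, whereas in Algorithm~\ref{alg:lbbd} it is called after. I would handle this by observing that GRASP affects only the MIP start and the cutoff, and both are valid upper-bound information. Since both masters are built from the same kernel, the same optimal value results. If dominance reductions could eliminate every optimal solution, the corollary would fail. I would therefore first verify that strict column dominance always leaves at least one optimum intact, by the standard exchange argument that replaces a dominated vertex with its dominator.
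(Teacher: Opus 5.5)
Your proposal is correct and follows essentially the paper's route. The paper gives no separate proof: the equal value $z^{\ast}$ is read off from Proposition~\ref{prop:lbbd-correct} and Theorem~\ref{thm:bbc-correct}, and you add, as the paper does, that both methods draw their cuts from the same family of at most $|R||B|$ rows $\sum_{v\in D(r,b)}x_v\ge 1$ over the synchronized sets $D(r,b)\cap V^{\text{nd}}$. Your check that strict column dominance always leaves at least one optimum is a worthwhile addition. The paper's two correctness proofs assume this without saying so when they call the kernelized master, which lives only on $V^{\text{nd}}$ and carries the GRASP cutoff, a relaxation of the full ILP.
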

	
	\subsection{Discussion: Why BBC Is a Natural Fit for MRBSSP}
	
	Three structural properties of MRBSSP jointly make it a favorable
	candidate for the Logic-Based Branch-and-Benders-Cut paradigm, in contrast
	to problems whose Benders subproblems are themselves expensive LPs:
	\begin{enumerate}
		\item \textbf{A compact master.}
		The master retains only $|V|$ binary variables and the simple
		essential-vertex fixing
		constraints~\eqref{eq:master-essential}; it contains none of the
		$|R||B|$ structural rows.
		
		\item \textbf{A cheap feasibility oracle.}
		Checking
		$S^{(k)} \cap D(r,b) \neq \emptyset$ for all pairs requires only a
		simple set-intersection sweep,
		$O(|R||B|)$ given precomputed coverage sets, with no LP or
		combinatorial subproblem to solve.
		
		\item \textbf{Simple, sparse, valid cuts.}
		Each generated cut is a single $\geq 1$ row over $D(r,b)$,
		syntactically identical to a row of the original ILP and requiring no
		auxiliary variables or big-$M$ constants.
	\end{enumerate}
	
	Combining this structure with safe iterative kernelization
	(essential vertices and row/column dominance with synchronization of the
	surviving separation sets) and a feasibility-validated GRASP warm start
	substantially reduces the number of explicitly materialized constraints
	while still exploiting the solver's native branch-and-cut engine. When
	GRASP is feasible, it supplies both an initial incumbent and the
	upper-bound cut~\eqref{eq:ub-cut}; otherwise, both are omitted without
	affecting exactness.
	
	\section{Experimental Results}
	\label{sec:experiments}
	
	The \textsc{Random} benchmark contains 250 instances arranged in 25 $(n,p)$ groups, where $n$ and $p$ represent the graph size and the graph density, respectively, 
	with 10 instances generated for each combination of $n\in\{20,50,100,200,500\}$ and
	$p\in\{0.01,0.05,0.1,0.2,0.5\}$. 
	Note that five sparse groups,
	$(20, 0.05)$, $(20, 0.1)$, $(50,0.01)$, $(100,0.01)$, and $(200,0.01)$, are identified
	as infeasible by the common feasibility checks involved in each of the exact approaches. Therefore, the results on those instances are not reported here. Thus, the quantitative comparison is reported for 200 feasible instances, i.e., 20 groups.
	
	Five approaches are evaluated: ($i$) the compact ILP with the $\mathcal{H}$=GRASP warm start  (\textsc{Ilp}); ($ii$) Branch-and-Benders-Cut with the GRASP warm start (\textsc{Bbc}); ($iii$) 
	Logic-based Benders decomposition with GRASP warm start (\textsc{Lbbd}); ($iv$)  Standalone (long-run) 
	\textsc{GRASP} approach (\textsc{Grasp}); ($v$) \textsc{VNS} baseline from the literature (\textsc{Vns}).
	
	All algorithms are implemented in Python~3.11, with the exact optimization
	models implemented using DOcplex and solved with IBM ILOG CPLEX~22.1.
	The experiments were conducted on the VEGA high-performance computing system
	hosted at IZUM, Maribor. The cluster comprises 960 compute nodes equipped with
	AMD EPYC~7H12 processors operating at 2.35~GHz. To ensure a consistent
	computational setting across all tested approaches, all experiments were
	executed in single-threaded mode.
	
	Each exact method was given a 3600-second  time limit after preprocessing and warm-start generation.
	On the \textsc{Random} instances these pre-solve phases take from several
	milliseconds to a few hundred seconds. On the NLP instances they take
	considerably longer: according to the LBBD logs, 123 to 158~s for
	$\qns=100$, 482 to 881~s for $\qns=150$, 1752 to 3979~s for $\qns=200$ and
	4274 to 6563~s for $\qns=250$. The time limits reported for the exact methods
	therefore refer to the solver phase only.
	The stochastic heuristic approaches were
	run independently ten times on each feasible instance, with every standalone (long-run)
	\textsc{Grasp} and \textsc{Vns} run subject to a 3600-second wall-clock limit. For \textsc{Grasp}, the
	solution reported for a run is the best feasible separator obtained before
	the time limit is reached. The substantially cheaper 10-iteration \textsc{Grasp} is used for the warm-start, 
	to initialize the exact methods as a separate procedure. The both \textsc{Grasp} settings utilized the size of restricted list $l_{\text{size}}$ to 5, selected after several preliminary results.  
	For each heuristic, the ten runs are
	first averaged within an instance and only then aggregated by $(n,p)$, so
	every graph receives equal statistical weight.

	\subsection{Algorithm tuning}
	\label{sec:algorithm-tuning}
	
	\textsc{Bbc} and \textsc{Lbbd} each expose one strategically important algorithm-specific
	parameter. We denote by $q_{\mathrm{BBC}}$ the maximum number of violated
	separation constraints added during one \textsc{Bbc} callback and by
	$q_{\mathrm{LBBD}}$ the maximum number added during one \textsc{Lbbd} master iteration.
	The parameter controls a natural trade-off. Small values keep each generated
	cut batch selective but may require many callbacks or repeated master solves,
	whereas very large values reduce the number of separation rounds at the price
	of substantially enlarging the current relaxation or master problem.
	
	We tune these parameters on dedicated subsets of both the \textsc{Random} and NLP
	benchmarks before fixing the values used in the final experiments. The tested
	values are
	\[
	q_{\mathrm{LBBD}}, q_{\mathrm{BBC}} \in\{1,2,5,10,50,100, 500\}.
	\]
	Parameter selection is
	based primarily on the final optimality gap and the number of proven optima;
	primal and dual bound quality and the number of separation rounds are used as
	secondary diagnostics.

	\subsubsection{Tuning on the \textsc{Random} benchmark}
	\label{sec:tuning-random}
	
	The completed \textsc{Random} tuning study evaluates the full parameter grid for both \textsc{Bbc} and \textsc{Lbbd}. The tuning set contains 20 instances, selecting one instance at random from each group of the \textsc{Random} benchmark. The aggregated results are reported in Appendix~\ref{sec:appendix-random-tuning}.
	
	The completed \textsc{Bbc} experiment confirms $q_{\mathrm{BBC}}=5$ as the
	\textsc{Random}-specific choice. It yields the smallest mean gap
	(25.39\%) and is the only
	tested value proving 10 of the 20 tuning instances optimal. The $q_{\textrm{BBC}}=500$ run gives mean gap
	26.16\% and 9 optimality
	certificates. Thus, increasing the callback batch from 5 to 500 reduces the
	average number of callbacks from
	241.45 to
	46.75, but increases the
	number of lazy rows materialized and does not improve final certificate quality.
	This confirms that \textsc{Bbc} benefits from a relatively selective cut batch inside
	one branch-and-bound tree.
	
	For \textsc{Lbbd}, $q_{\mathrm{LBBD}}=100$ remains the strongest \textsc{Random}-specific
	value, with mean gap
	24.85\% and 9 proven
	optima. At $q_{\mathrm{LBBD}}=500$, the number of master iterations falls further from
	10.35 to
	5.30, but the mean gap
	worsens to 25.86\% and the
	number of certificates falls to 8. Hence the complete \textsc{Random} tuning grid shows
	a clear over-batching regime beyond $q_{\mathrm{LBBD}}=100$ for \textsc{Lbbd}.
	
	\begin{figure}[htbp]
		\centering
		
		\begin{minipage}[t]{0.48\textwidth}
			\centering
			\includegraphics[width=\textwidth]{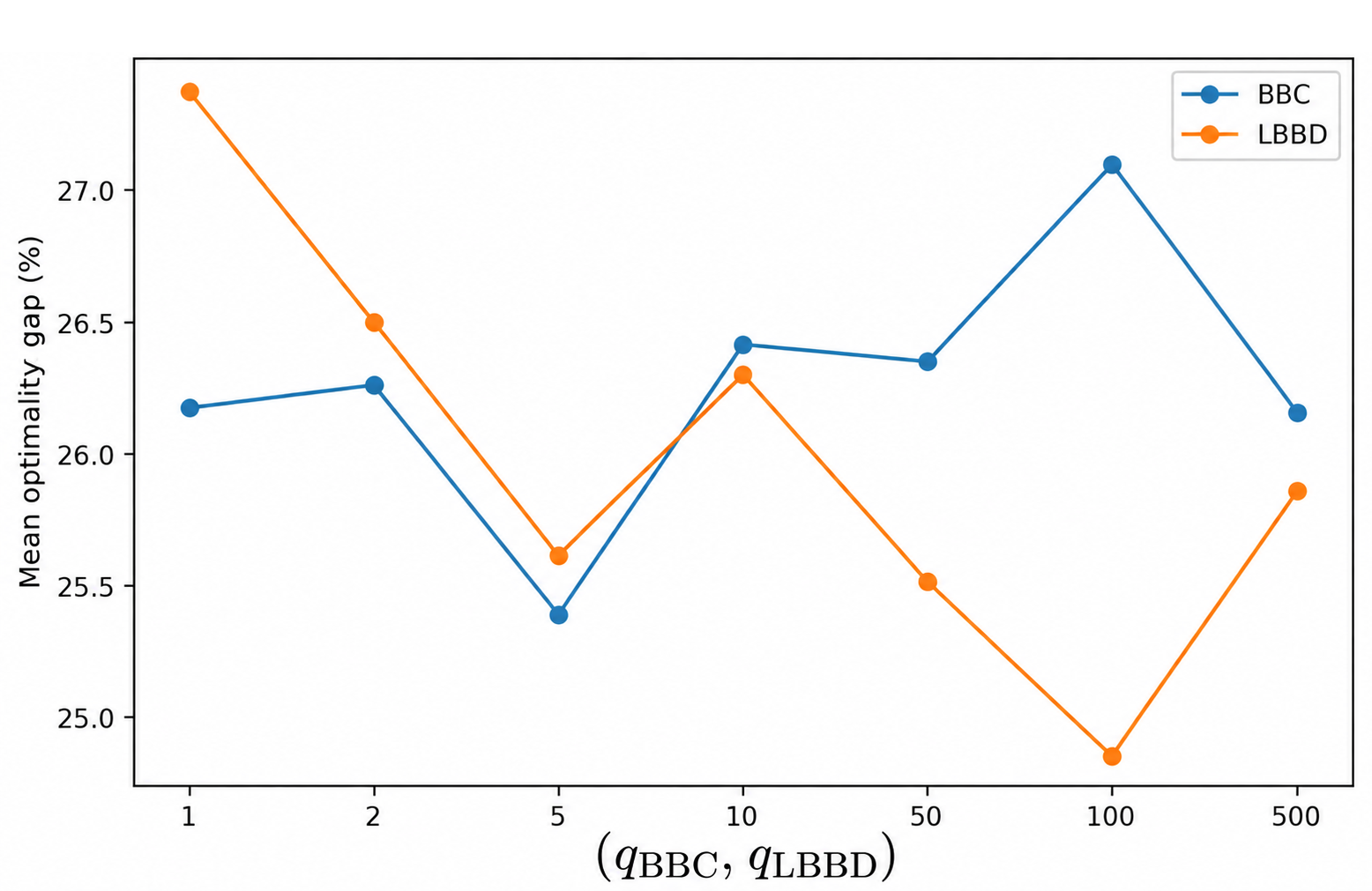}
			\captionof{figure}{Mean optimality gap on the complete \textsc{Random} tuning grid across different
				$q=q_{\text{LBBD}}, q_{\text{BBC}}$ values.}
			\label{fig:tuning-random-gap}
		\end{minipage}
		\hfill
		\begin{minipage}[t]{0.48\textwidth}
			\centering
			\includegraphics[width=\textwidth]{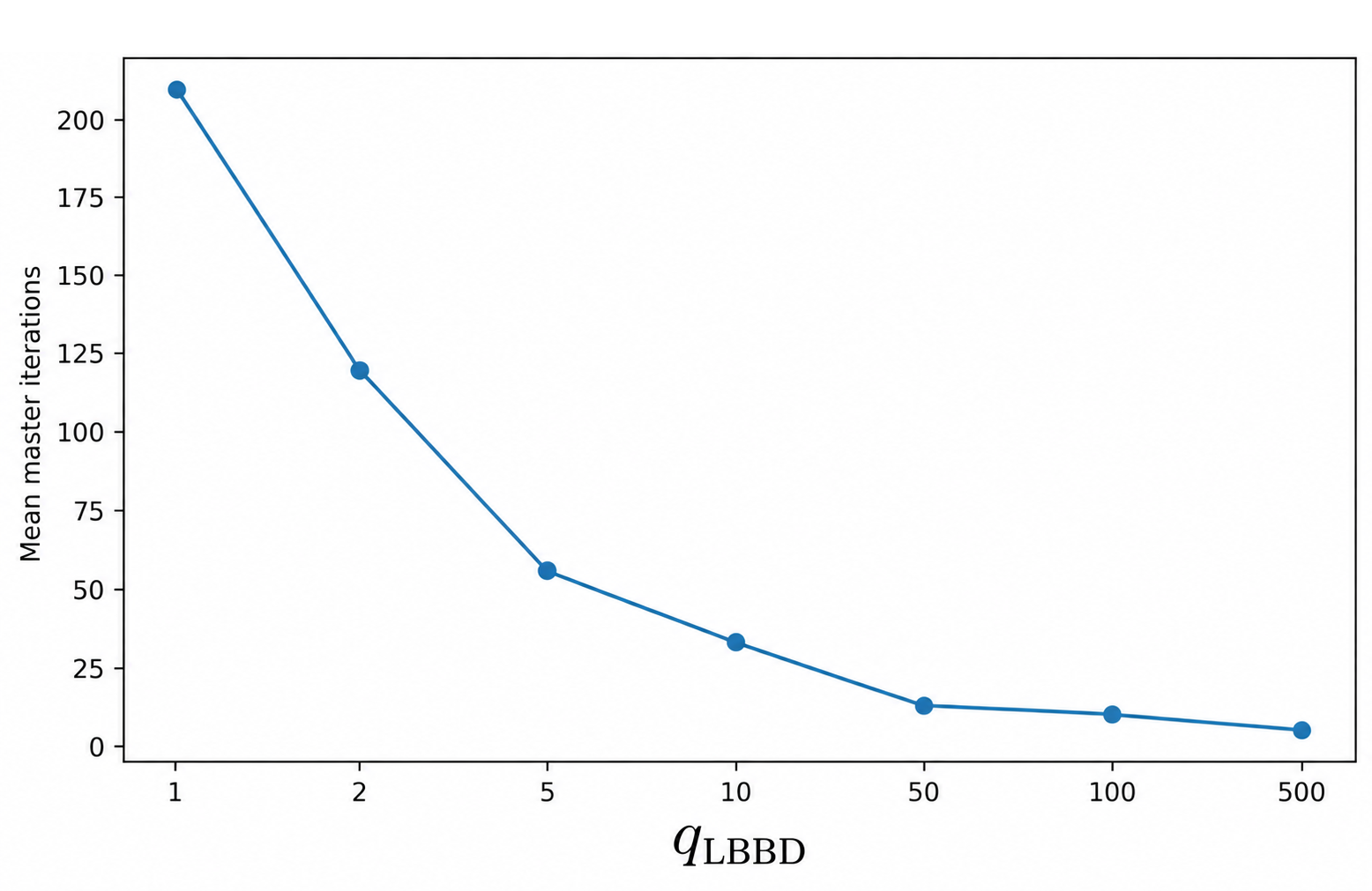}
			\captionof{figure}{Mean number of LBBD master iterations on the \textsc{Random} benchmark as the maximum
				number of cuts per iteration increases.}
			\label{fig:tuning-random-lbbd-it}
		\end{minipage}
		
	\end{figure}

	The completed benchmark-specific tuning therefore selects
	\[
	(q_{\mathrm{BBC}},q_{\mathrm{LBBD}})
	=(5,100)\text{ for \textsc{Random}}.
	\]
	
	%For the cross-benchmark comparison we use one common configuration per
	%algorithm. For BBC,
	%$q_{\mathrm{BBC}}=50$ still gives the smallest pooled mean gap
	%(17.88\%) and ties $q=5$ for the largest number of
	%optimality certificates (20/34). The aggressive
	%$q_{BBC}/q_{LLBD}=500$ setting yields a larger pooled mean gap (18.21\%) and only
	%19/34 certificates, so the new \textsc{Random} results
	%strengthen rather than change the choice of $q_{\mathrm{BBC}}=50$.
	
	%For LBBD, $q_{\mathrm{LBBD}}=100$ remains the best common value, with
	%the smallest pooled mean gap (16.85\%) and
	%19/34 proven optima. Although $q=500$ is the best
	%NLP-specific setting, its weaker \textsc{Random} behavior raises the pooled gap to
	%17.19\% and reduces the certificate count to
	%18/34. Consequently, the final common configuration
	%remains
	%\[
	%\boxed{q_{\mathrm{BBC}}=50,\qquad %q_{\mathrm{LBBD}}=100}.
	%\]
	
	%\begin{figure}[htbp]
	%	\centering
	%	\includegraphics[width=0.76\textwidth]{figures/tuning_combined_gap.png}
	%	\caption{Pooled mean optimality gap over the complete \textsc{Random} and NLP tuning
		%		grids. All seven tested values are now available for both algorithms.}
	%	\label{fig:tuning-combined-gap}
	%\end{figure}

	\subsection{Overall exact-method behavior}
	
	After tuning, \textsc{Bbc} is run with $q_{\mathrm{BBC}}=5$ and \textsc{Lbbd} with
	$q_{\mathrm{LBBD}}=100$ on the complete \textsc{Random} benchmark. \textsc{Vns} uses the settings from~\cite{Milojkovic2025}, and \textsc{Grasp} utilizes the same configuration reported in Section~\ref{sec:grasp}. All three exact methods use the same 10-iteration \textsc{Grasp} warm-start procedure; otherwise, \textsc{Ilp} uses CPLEX's default settings.
	Table~\ref{tab:random-summary}
	summarizes the resulting exact-method performance on all 200 feasible instances.
	
	\begin{table}[htbp]
		\centering
		\caption{Overall results of the exact methods on the 200 (feasible) \textsc{Random} instances after tuning.}
		\label{tab:random-summary}
		\small
		\begin{tabular}{lrrrr}
			\toprule
			Method & Mean UB & Mean LB & Mean gap & Optimal \\
			\midrule
			\textsc{Ilp} & 18.83 & 12.40 & 29.54\% & 90/200 \\
			\textsc{Bbc} & 19.09 & 13.11 & 25.97\% & \textbf{92/200} \\
			\textsc{Lbbd} & 20.25 & 13.63 & \textbf{24.37}\% & 91/200 \\
			\bottomrule
		\end{tabular}
	\end{table}
	
	\textsc{Ilp} remains the strongest exact method on the primal side, with mean UB
	18.83. Tuned \textsc{Lbbd} gives the strongest mean lower bound
	(13.63) and the smallest mean gap
	(24.37\%), while tuned \textsc{Bbc} proves the largest number
	of instances solved to optimality (92/200). Instance-wise, \textsc{Ilp}
	attains a best or tied-best exact UB on 191 instances and is
	uniquely best on 38; \textsc{Bbc} and \textsc{Lbbd} are uniquely best on
	3 and 2 instances, respectively. On the
	dual side, \textsc{Lbbd} is uniquely best on 80 instances, compared
	with 1 for \textsc{Bbc} and 0 for \textsc{Ilp}.
	
	\subsection{Optimality gap and lower-bound quality}
	\label{sec:exp-gap}
	
	Over the 200 paired feasible instances, a Friedman test detects a significant
	difference among the exact-method gaps ($\chi^2=124.03$,
	$p<10^{-12}$). The aggregate mean gaps are 29.54\%
	for \textsc{Ilp}, 25.97\% for \textsc{Bbc}, and
	24.37\% for \textsc{Lbbd}. On the 11 groups
	for which at least one exact method does not solve all ten instances, the mean
	group gaps are 53.71\%, 47.21\%,
	and 44.30\%, respectively.
	
	\begin{figure}[h]
		\centering
		\includegraphics[width=0.8\textwidth]{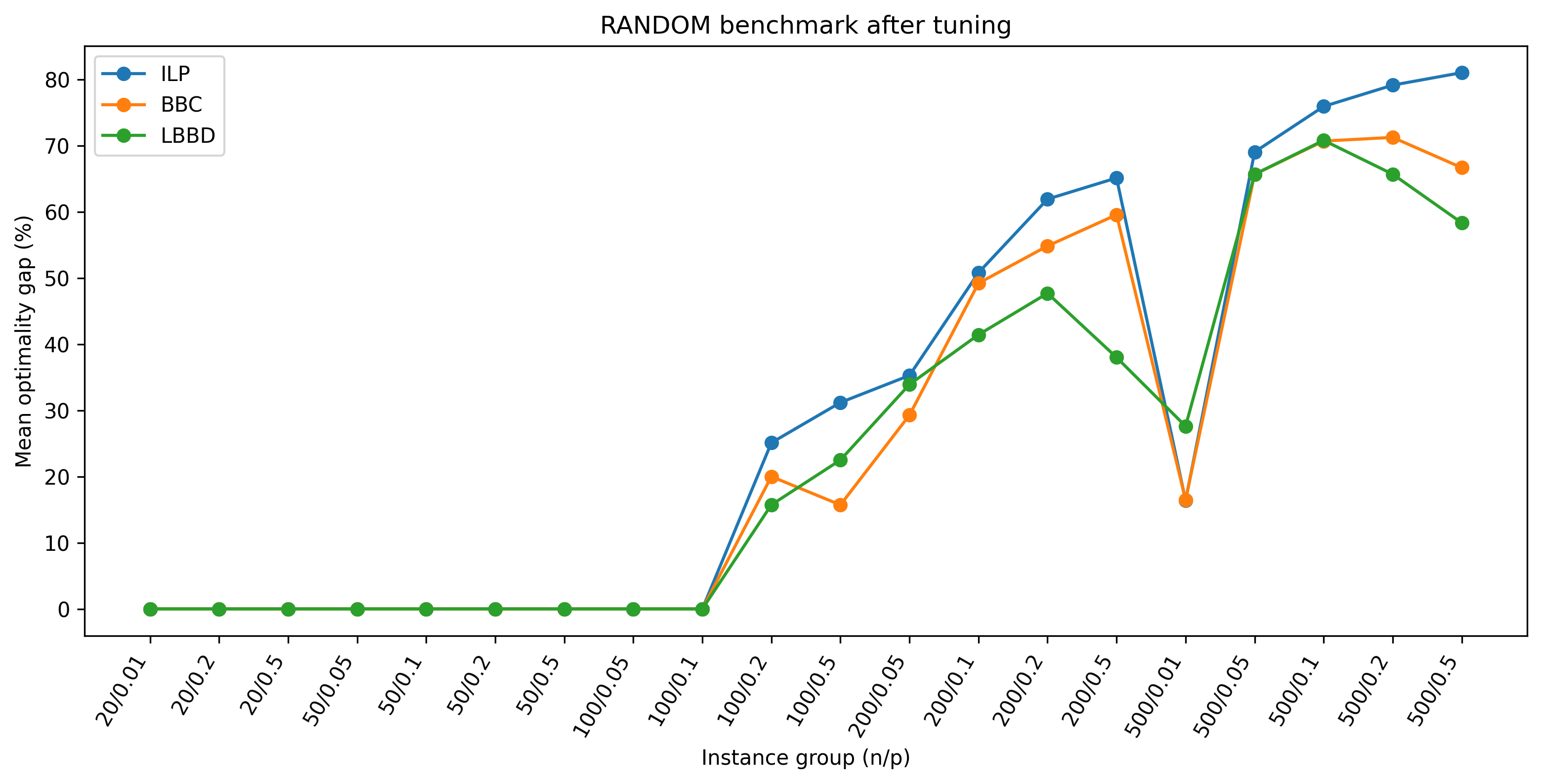}
		\caption{Mean optimality gaps of \textsc{Ilp}, tuned \textsc{Bbc}, and tuned \textsc{Lbbd} on the feasible \textsc{Random} groups.}
		\label{fig:gap-n200}
	\end{figure}

	\begin{table}[!ht]
		\centering
		\caption{Mean lower bounds after tuning. Bold indicates the strongest mean lower bound in each group.}
		\label{tab:lb-comparison}
		\scriptsize
		\begin{tabular}{lrrr}
			\toprule
			Instance ($n/p$) & \textsc{Ilp} & \textsc{Bbc} & \textsc{Lbbd} \\
			\midrule
			20/0.01 & \textbf{9.00} & \textbf{9.00} & \textbf{9.00} \\
			20/0.2 & \textbf{5.00} & \textbf{5.00} & \textbf{5.00} \\
			20/0.5 & \textbf{3.00} & \textbf{3.00} & \textbf{3.00} \\
			50/0.05 & \textbf{16.00} & \textbf{16.00} & \textbf{16.00} \\
			50/0.1 & \textbf{10.00} & \textbf{10.00} & \textbf{10.00} \\
			50/0.2 & \textbf{8.00} & \textbf{8.00} & \textbf{8.00} \\
			50/0.5 & \textbf{6.00} & \textbf{6.00} & \textbf{6.00} \\
			100/0.05 & \textbf{20.00} & \textbf{20.00} & \textbf{20.00} \\
			100/0.1 & \textbf{14.00} & \textbf{14.00} & \textbf{14.00} \\
			100/0.2 & 7.49 & 8.00 & \textbf{9.00} \\
			100/0.5 & 4.82 & \textbf{6.20} & 6.10 \\
			200/0.05 & 18.06 & 20.00 & \textbf{21.00} \\
			200/0.1 & 9.29 & 10.00 & \textbf{12.00} \\
			200/0.2 & 4.91 & 6.00 & \textbf{7.00} \\
			200/0.5 & 3.30 & 4.00 & \textbf{6.00} \\
			500/0.01 & 83.22 & 85.00 & \textbf{86.50} \\
			500/0.05 & 13.41 & \textbf{15.00} & \textbf{15.00} \\
			500/0.1 & 6.55 & \textbf{8.00} & \textbf{8.00} \\
			500/0.2 & 3.63 & 5.00 & \textbf{6.00} \\
			500/0.5 & 2.28 & 4.00 & \textbf{5.00} \\
			\bottomrule
		\end{tabular}
	\end{table}
	
	%\subsection{Optimality certificates}
	
	Using $\lceil\mathrm{LB}\rceil=\mathrm{UB}$ as the certification criterion, \textsc{Bbc} proves
	92 of the 200 feasible instances optimal, compared
	with 90 for \textsc{Ilp} and 91
	for \textsc{Lbbd}.
	
	%\begin{figure}[htbp]
	%	\centering
	%	\includegraphics[width=0.63\textwidth]{figures/random_posttuning_certificates.png}
	%	\caption{Number of feasible \textsc{Random} instances proved optimal after tuning.}
	%	\label{fig:random-certificates}
	%\end{figure}
	
	\subsection{Primal solution quality: exact methods and meta-heuristics}
	\label{sec:exp-grasp}
	
	\textsc{Grasp} and \textsc{Vns} achieve overall mean solution sizes of 19.65 and 19.52,
	respectively, and on the largest instances they frequently provide better
	primal solutions than the incumbents returned by the exact methods within
	the time limit. \\
	On the 200 paired feasible instances, \textsc{Grasp} is better on
	57, \textsc{Vns} on 40, and the methods tie on 103; the paired
	Wilcoxon test remains non-significant ($p=0.414$). Relative to the
	best exact incumbent after tuning, the mean excess is
	0.81\% for \textsc{Grasp} and
	1.41\% for \textsc{Vns}. All numerical results are shown in Table~\ref{tab:ub-comparison}.

	\begin{table}[htbp]
		\centering
		\caption{Mean upper bounds after tuning. Bold indicates the smallest value in each group.}
		\label{tab:ub-comparison}
		\scriptsize
		\begin{tabular}{lrrrrr}
			\toprule
			Instance ($n/p$) & \textsc{Ilp} & \textsc{Bbc} & \textsc{Lbbd} & \textsc{Grasp} & \textsc{Vns} \\
			\midrule
			20/0.01 & \textbf{9.00} & \textbf{9.00} & \textbf{9.00} & \textbf{9.00} & \textbf{9.00} \\
			20/0.2 & \textbf{5.00} & \textbf{5.00} & \textbf{5.00} & \textbf{5.00} & \textbf{5.00} \\
			20/0.5 & \textbf{3.00} & \textbf{3.00} & \textbf{3.00} & \textbf{3.00} & \textbf{3.00} \\
			50/0.05 & \textbf{16.00} & \textbf{16.00} & \textbf{16.00} & \textbf{16.00} & \textbf{16.00} \\
			50/0.1 & \textbf{10.00} & \textbf{10.00} & \textbf{10.00} & \textbf{10.00} & \textbf{10.00} \\
			50/0.2 & \textbf{8.00} & \textbf{8.00} & \textbf{8.00} & \textbf{8.00} & \textbf{8.00} \\
			50/0.5 & \textbf{6.00} & \textbf{6.00} & \textbf{6.00} & \textbf{6.00} & \textbf{6.00} \\
			100/0.05 & \textbf{20.00} & \textbf{20.00} & \textbf{20.00} & 20.63 & 20.01 \\
			100/0.1 & \textbf{14.00} & \textbf{14.00} & \textbf{14.00} & 14.05 & 14.52 \\
			100/0.2 & \textbf{10.00} & \textbf{10.00} & 10.70 & \textbf{10.00} & 10.05 \\
			100/0.5 & \textbf{7.00} & 7.40 & 7.90 & \textbf{7.00} & 7.87 \\
			200/0.05 & 27.90 & 28.30 & 31.80 & 29.33 & \textbf{27.80} \\
			200/0.1 & \textbf{18.90} & 19.70 & 20.50 & 18.97 & 19.94 \\
			200/0.2 & 12.90 & 13.30 & 13.40 & \textbf{12.49} & 12.99 \\
			200/0.5 & 9.50 & 9.90 & 9.70 & \textbf{9.00} & \textbf{9.00} \\
			500/0.01 & \textbf{99.50} & 101.80 & 119.50 & 116.38 & 113.14 \\
			500/0.05 & 43.30 & 43.70 & 43.70 & \textbf{42.40} & 42.99 \\
			500/0.1 & 27.20 & 27.30 & 27.40 & 26.80 & \textbf{26.00} \\
			500/0.2 & 17.40 & 17.40 & 17.50 & \textbf{17.00} & \textbf{17.00} \\
			500/0.5 & 12.00 & 12.00 & 12.00 & \textbf{11.97} & 12.00 \\ \hline 
			AVG.  &  18.83 & 19.09 & 20.26 & 19.65 & 19.52 \\ 
			\bottomrule
		\end{tabular}
	\end{table}
	
	We note that the mean within-instance
	standard deviations of the two heuristic approaches remain 0.21 for \textsc{Grasp} and 0.22 for \textsc{Vns}.
	
	The post-tuning \textsc{Random} results therefore sharpen the division of roles: \textsc{Ilp} is
	the strongest primal exact method, \textsc{Lbbd} gives the strongest dual bounds and
	smallest mean gap, and \textsc{Bbc} proves the largest number of instances optimal.

	\section{NLP Benchmark: Vocabulary Separation for Human- and AI-Generated Text}
	\label{sec:nlp-benchmark}
	
	\subsection{Motivation and graph interpretation}
	\label{sec:nlp-motivation}
	
	The NLP benchmark investigates a feature-selection interpretation of MRBSSP in
	which red and blue vertices represent texts from two classes (human-generated
	and AI-generated), while neutral vertices represent candidate vocabulary
	features. The objective is to identify a smallest set of vocabulary features
	whose graph-incidence patterns distinguish every red--blue pair. %All red and
	%blue object vertices are completed to a clique while neutral feature vertices
	%and their original incidences are left unchanged. 
	The separator here is forced to rely on neutral vocabulary features.
	
	\paragraph{Relation to work on machine-generated text.}
	Detection of machine-generated text is an active research area; a recent
	survey is given by Wu et al.~\cite{WuEtAl2025}. Many detectors assign a
	score to a document, either through a trained classifier or through
	statistics of token probabilities under a language
	model~\cite{GehrmannEtAl2019,MitchellEtAl2023}, and their decisions are not
	stated in terms of a small set of observable features; detectors that do
	rely on a small set of stylometric or lexical features do not ask whether
	that set is the smallest one that separates the sample. Studies that compare
	human-written and machine-generated texts directly report systematic
	differences in vocabulary and style. Herbold et
	al.~\cite{HerboldEtAl2023} compare essays written by students and by
	ChatGPT with respect to lexical diversity, syntactic complexity and the use
	of modal, epistemic and discourse markers, and Kobak et
	al.~\cite{KobakEtAl2025} identify words whose frequency in biomedical
	abstracts increased sharply after large language models became widely
	available. In both cases the differences are established by frequency
	statistics over large collections of texts. In text classification,
	feature selection methods rank words by criteria such as document
	frequency, the $\chi^2$ statistic or information
	gain~\cite{YangPedersen1997,Forman2003}, and the number of retained words
	is left to the user. The MRBSSP formulation asks a different question: what
	is the smallest set of words whose presence or absence distinguishes every
	human-written text of a given sample from every machine-generated text of
	the same sample? In rough set theory this object is a minimum decision re-duct of the
	binary decision table whose objects are the texts, whose attributes are
	the candidate words and whose decision is the class~\cite{SkowronRauszer1992}, and the exact methods
	of this paper make it possible to certify that no smaller set exists. The
	benchmark is not intended as a new detector. Separating all pairs of a
	fixed sample is a property of that sample, and the classification of unseen texts is not evaluated here.
	%% ---- DOPUNA ----
	{In practice, a certified minimum separator is a tool for inspecting a data set rather than for labelling new texts. Its size is a lower bound on the number of words that any rule based on word presence needs in order to tell the two sources apart on that sample, so it measures how far apart the two sources are lexically and can be compared across samples, domains or generators. Its content shows which words suffice for the separation; when these are a few function words or topic words, as in Section~\ref{sec:nlp-optimal-node-interpretation}, a detector trained on the same sample can separate it perfectly through surface features unrelated to authorship, which is the kind of shortcut that benchmark construction for machine-text detection tries to avoid.}
	%% ---- kraj dopune -----  
	
	\paragraph{Corpus and sampling.}
	The instances are built from the English part of the HC3 corpus (Human
	ChatGPT Comparison Corpus)~\cite{GuoEtAl2023}. HC3 contains 24\,322
	questions from five sources (Reddit ELI5, finance, medicine, open-domain
	question answering and Wikipedia-based computer science), together with
	58\,546 human-written answers and 26\,903 answers generated by ChatGPT.
	Every question has at least one human answer; most questions have one
	ChatGPT answer, 1\,833 have several and 455 have none. An instance is
	determined by the number $\qns$ of questions and by a seed. The $\qns$ questions
	are drawn uniformly at random without replacement from the whole corpus,
	using a pseudo-random number generator initialized with the seed, and for
	every drawn question the first human answer and all ChatGPT answers are
	retained.
	The seeds are the consecutive integers 42 to 51 and were fixed before any
	experiment was run, so that no sample was selected or discarded on the
	basis of the results. Since the generator draws the questions one at a
	time, the sample of $\qns$ questions for a given seed consists of the first
	$\qns$ questions of every larger sample with the same seed, and the samples
	with the same seed are therefore nested across sizes.
	Each answer is one document. The human answers form the red set
	$R$ and the ChatGPT answers form the blue set $B$, so that $|R|=\qns$ before
	preprocessing, while $|B|$ lies between $1.06\,\qns$ and $1.26\,\qns$ in our
	samples because some questions contribute more than one generated answer.
	
	\paragraph{Tokenization and binarization.}
	Every document is first converted to lower case. HC3 is distributed in
	Penn Treebank tokenization~\cite{marcus1993building}, in which clitics appear as separate tokens
	(for example \emph{should n't} or \emph{they 're}), so the clitics are
	normalized before words are extracted: the truncated forms \emph{ca n't},
	\emph{wo n't} and \emph{sha n't} are expanded to \emph{can not},
	\emph{will not} and \emph{shall not}, every remaining \emph{n't} is
	rewritten as \emph{not}, and the other clitics (\emph{'s}, \emph{'re},
	\emph{'ve}, \emph{'ll}, \emph{'d}, \emph{'m}) are removed because their
	expansion is ambiguous. The text is then split into words, a word being a
	maximal sequence of letters and digits, and one-letter tokens other than
	\emph{a} and \emph{i} are discarded. The token \emph{url}, which HC3
	substitutes for web links in the human answers, is excluded, since it would
	separate pairs for a reason unrelated to the language of the texts. For a threshold $k\ge 1$, a word $t$ is considered present in a document $d$ if
	it occurs in $d$ at least $k$ times, and the binary incidence matrix is
	defined by $A_{dt}=1$ if $t$ is present in $d$ and $A_{dt}=0$ otherwise. A
	word is kept as a candidate if it is present in at least two documents and
	in at most $90\,\%$ of the documents of the instance, both counts being
	taken on the binarized matrix. The standard instances use $k=1$, that is,
	plain presence. The variants use $k=2$ or $k=3$ on the same sample of questions as the corresponding standard instance.
	%% ---- DOPUNA ----
	{A threshold $k>1$ counts a word only when a text uses it repeatedly (at least $k$ times), so a single incidental occurrence no longer separates a pair. It also yields instances of a different structure, with three to eight times fewer candidates and some pairs separated by very few words, which is why the variants behave differently from the standard instances in Sections~\ref{sec:nlp_pace} and~\ref{sec:nlp-optimal-node-interpretation}.}
	%% ---- kraj dopune ----

	\paragraph{Graph.}
	Every document is a colored vertex, red for $R$ and blue for $B$, and
	every candidate word is a neutral vertex. A word vertex $t$ is adjacent to
	a document vertex $d$ if and only if $A_{dt}=1$, and there are no edges
	between word vertices. The document vertices are then completed to a
	clique, as described above, so that no document vertex separates any pair.
	A word $t$ separates the pair $(r,b)$ exactly when it is present in one of
	the two documents and absent from the other, and the covering constraints
	of the instance are given by the sets
	\[
	D(r,b)=\{\,t : A_{rt}\neq A_{bt}\,\},\qquad r\in R,\ b\in B .
	\]
	Completing the documents to a clique and restricting the candidate set to
	the word vertices yield the same sets $D(r,b)$, because the clique changes
	only the neighborhoods of the document vertices. %The instance files
	%contain only the document--word edges and the colours of the vertices; the
	%clique is added when an instance is loaded.
	
	\paragraph{Inseparable pairs.}
	If a red and a blue document have identical rows of $A$, then
	$D(r,b)=\emptyset$ and the instance has no feasible separator. This does
	not happen on any standard instance, but it does happen under the stricter
	binarization, because a short text retains only a few words when $k\ge 2$
	and two different texts can then have the same row. Such pairs are
	detected by grouping the documents by their rows, and for every row that
	is shared by documents of both classes the documents of the smaller class
	are removed. Groups with different rows are independent, so this removes
	the smallest possible number of documents.
	%% ---- DOPUNA ----
	{This step removes nothing from the standard instances and between zero and four documents on each variant, out of more than 200 documents at $\qns=100$ and more than 500 at $\qns=250$, so the samples remain practically unchanged.}
	%% ---- kraj dopune ----

	\paragraph{Optimum-preserving preprocessing.}
	Before an instance is written to file, two reductions that do not change
	the optimum value are applied: documents of the same class with identical
	rows of $A$ are merged, and a word $t$ is removed when every pair it
	separates is also separated by another word $t'$. On vocabularies larger
	than 3\,000 words the second test is applied to a random sample of word
	pairs, so that some dominated words may remain. Neither reduction touches
	the pair constraints, so the kernelization of Section~\ref{sec:ilp}, which
	removes dominated rows and fixes essential words, is still applied by the
	exact methods on top of them.
	
	\paragraph{Instance set.}
	An instance is denoted by the triple $(\qns, \mathit{seed}, k)$; for
	example, $(100, 42, 3)$ is the variant with $k=3$ on the sample of 100
	questions drawn with seed 42. In the published files this instance is
	named \texttt{hc3\_q100\_seed42\_k3}, and the names of the standard
	instances carry no suffix.
	The benchmark contains 42 instances. The 32 standard instances use $k=1$,
	with $\qns\in\{100,150,200\}$ and seeds 42 to 51, and with $\qns=250$ and seeds
	42 and 43. The ten variants use $k=2$ and $k=3$ on the samples with
	$\qns=100$ and seeds 42, 43 and 44, and on the samples with $\qns=250$ and seeds
	42 and 43. Table~\ref{tab:nlp-instances} summarizes the size of the
	instances after preprocessing. The number of red--blue pairs grows
	quadratically with $\qns$, from about $1.1\cdot 10^{4}$ at $\qns=100$ to about
	$7\cdot 10^{4}$ at $\qns=250$, whereas the number of candidate words grows
	roughly linearly. The stricter binarization reduces the candidate set by a
	factor of three to eight and leaves some pairs that are separated by very
	few words; on six of the ten variants some pair is separated by a single
	word, which then belongs to every feasible separator.
	
	\begin{table}[htbp]
		\centering
		\caption{Size of the 42 NLP instances after preprocessing, given as the
			range over the instances of each group. $|T|$: number of candidate
			words; $|E|$: number of document--word edges; $|R|\,|B|$: number of
			red--blue pairs.}
		\label{tab:nlp-instances}
		\small
		\begin{tabular}{llrrrrrr}
			\toprule
			Group & $\qns$ & $N$ & $|R|$ & $|B|$ & $|T|$ & $|E|$ & $|R|\,|B|$ \\
			\midrule
			standard, $k=1$ & 100 & 10 & 99--100 & 108--126 & 1\,622--1\,829 & 13\,239--14\,686 & 10\,800--12\,600 \\
			& 150 & 10 & 149--150 & 159--188 & 2\,325--2\,500 & 21\,313--23\,043 & 23\,850--28\,200 \\
			& 200 & 10 & 199--200 & 213--244 & 2\,880--3\,163 & 29\,307--31\,342 & 42\,600--48\,800 \\
			& 250 & 2  & 250 & 279--291 & 3\,526--3\,621 & 37\,265--38\,761 & 69\,750--72\,750 \\
			\midrule
			variant, $k=2$  & 100 & 3  & 93--95 & 108--116 & 477--568 & 4\,397--5\,042 & 10\,260--10\,788 \\
			& 250 & 2  & 231--236 & 279--291 & 1\,080--1\,141 & 11\,743--12\,527 & 65\,844--67\,221 \\
			variant, $k=3$  & 100 & 3  & 75--82 & 108--115 & 222--267 & 2\,175--2\,618 & 8\,100--8\,938 \\
			& 250 & 2  & 185--202 & 276--287 & 486--541 & 5\,970--6\,321 & 53\,095--55\,752 \\
			\bottomrule
		\end{tabular}
	\end{table}

	\subsubsection{Sensitivity of the $q_{\text{BBC}}$ and $q_{\text{LBBD}}$ parameters }
	\label{sec:tuning-nlp}
	
	The complete NLP tuning experiment evaluates
	\[
	q_{\mathrm{BBC}},q_{\mathrm{LBBD}}\in
	\{1,2,5,10,50,100,500\}
	\]
	on all 42 NLP benchmark instances: the 32 standard
	instances together with the ten $k \in \{2, 3\}$ variants. These results are reported in Appendix~\ref{sec:appendix-nlp-tuning}. We report mean primal and dual bounds, mean relative optimality gap, number of proven optima, and separation effort over all 42 instances. We selected $q_{\text{BBC}}$ and $q_{\text{LBBD}}$ by maximizing the number of proven optima, using the mean relative optimality gap to break ties.

	The complete \textsc{Bbc} experiment confirms that a small cut batch is preferable.
	$q_{\mathrm{BBC}}=2$ gives the smallest mean gap,
	13.21\%, and proves
	16 of the 42
	instances optimally. The only other setting with the same number of certificates
	is $q_{\text{BBC}}=10$, but its mean gap is slightly larger
	(13.29\%) and its mean
	lower bound is slightly weaker
	(12.05 versus
	12.12). The 
	$q_{\text{BBC}}=500$ setting proves
	12 instances
	optimal, but its mean gap rises to
	16.26\%, indicating
	that very aggressive callback batching is not beneficial for \textsc{Bbc}. We therefore
	select
	\[
	\boxed{q_{\mathrm{BBC}}=2}.
	\]

	For \textsc{Lbbd}, $q_{\text{LBBD}}=50$ gives the smallest mean gap
	(14.20\%), whereas
	$q_{\text{LBBD}}=100$ is very close
	(14.38\%).
	Both values attain the same mean lower bound,
	12.29, but
	$q_{\text{LBBD}}=100$ is the only setting that proves 11 instances optimal and it also
	reduces the average number of master iterations to 18.4.
	The $q_{\text{LBBD}}=500$ setting reduces the iteration count further to
	11.8, but its mean
	gap deteriorates to 15.23\% and the number of certificates drops to
	10. Hence, we retain
	\[
	\boxed{q_{\mathrm{LBBD}}=100}.
	\]
	
	Figures~\ref{fig:tuning-nlp-gap}--\ref{fig:tuning-nlp-effort} summarize the
	complete tuning study.
	
	\begin{figure}[h]
		\centering
		
		\begin{subfigure}[t]{0.48\textwidth}
			\centering
			\includegraphics[width=\textwidth]{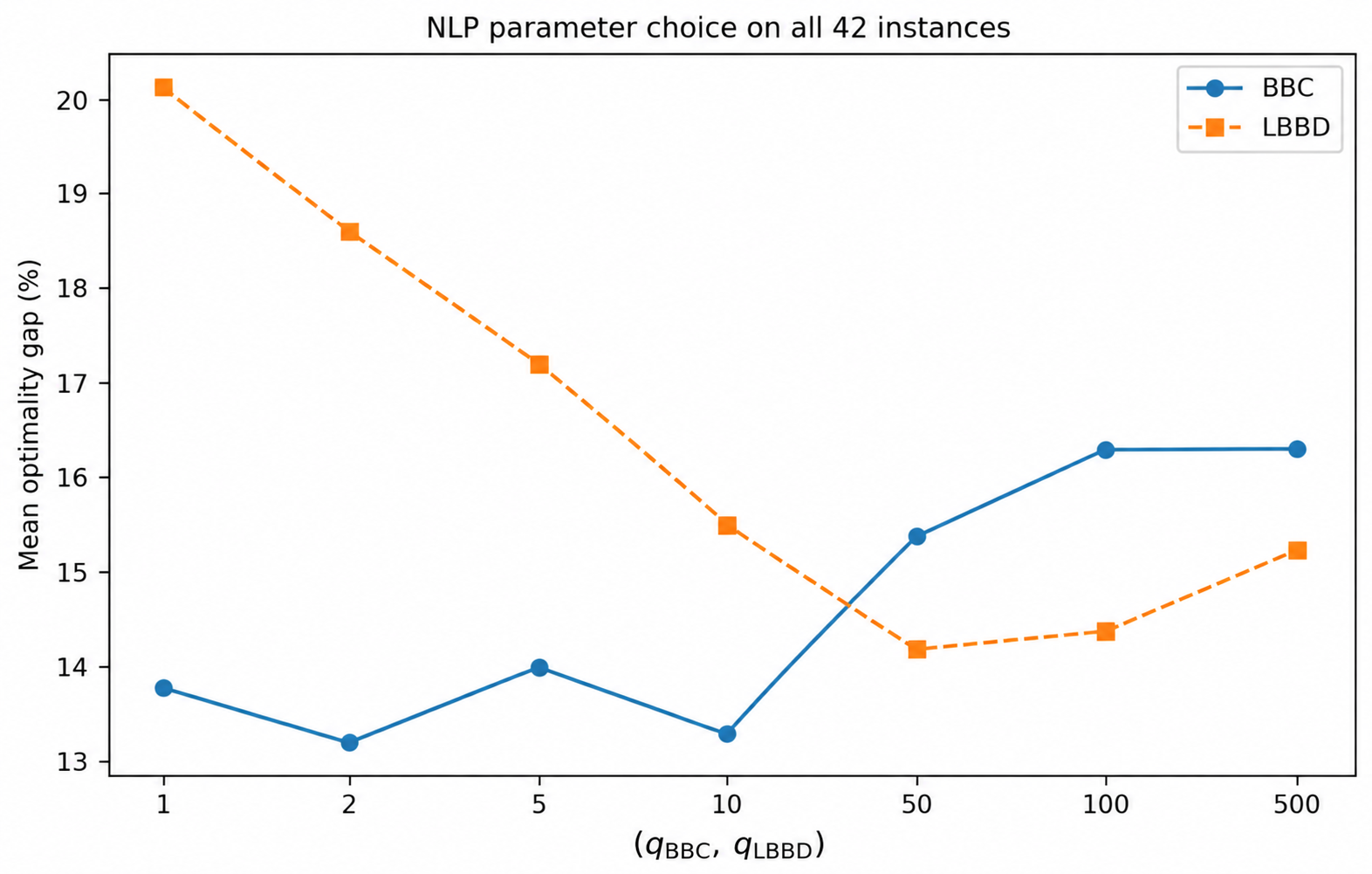}
			\caption{Mean optimality gap for \textsc{Bbc} and \textsc{Lbbd} over all 42 NLP instances.}
			\label{fig:tuning-nlp-gap}
		\end{subfigure}
		\hfill
		\begin{subfigure}[t]{0.48\textwidth}
			\centering
			\includegraphics[width=\textwidth]{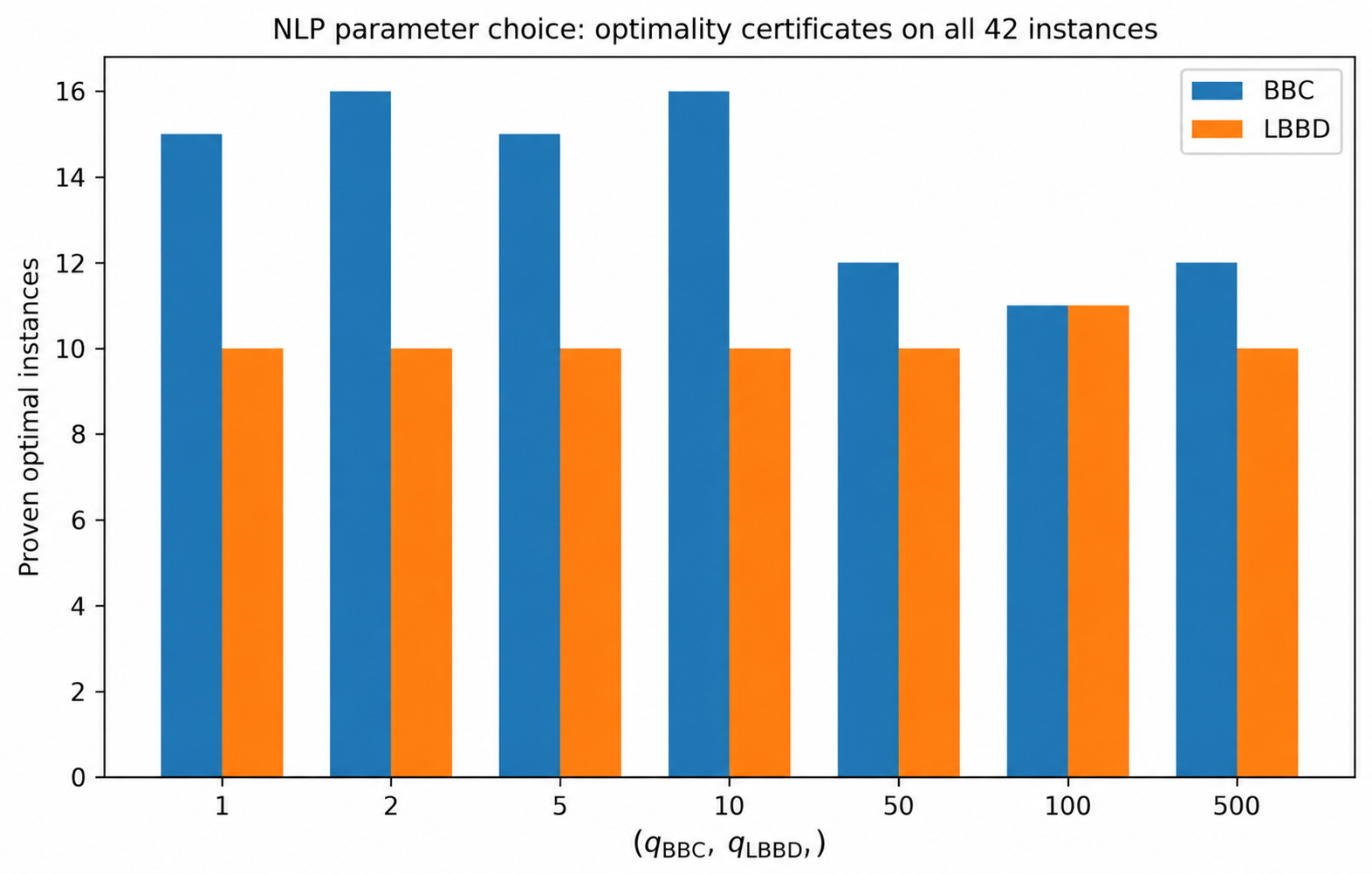}
			\caption{Number of the 42 NLP instances proved optimal for each tested parameter value.}
			\label{fig:tuning-nlp-certificates}
		\end{subfigure}
		
		\vspace{0.5cm}
		
		\begin{subfigure}[t]{0.48\textwidth}
			\centering
			\includegraphics[width=\textwidth]{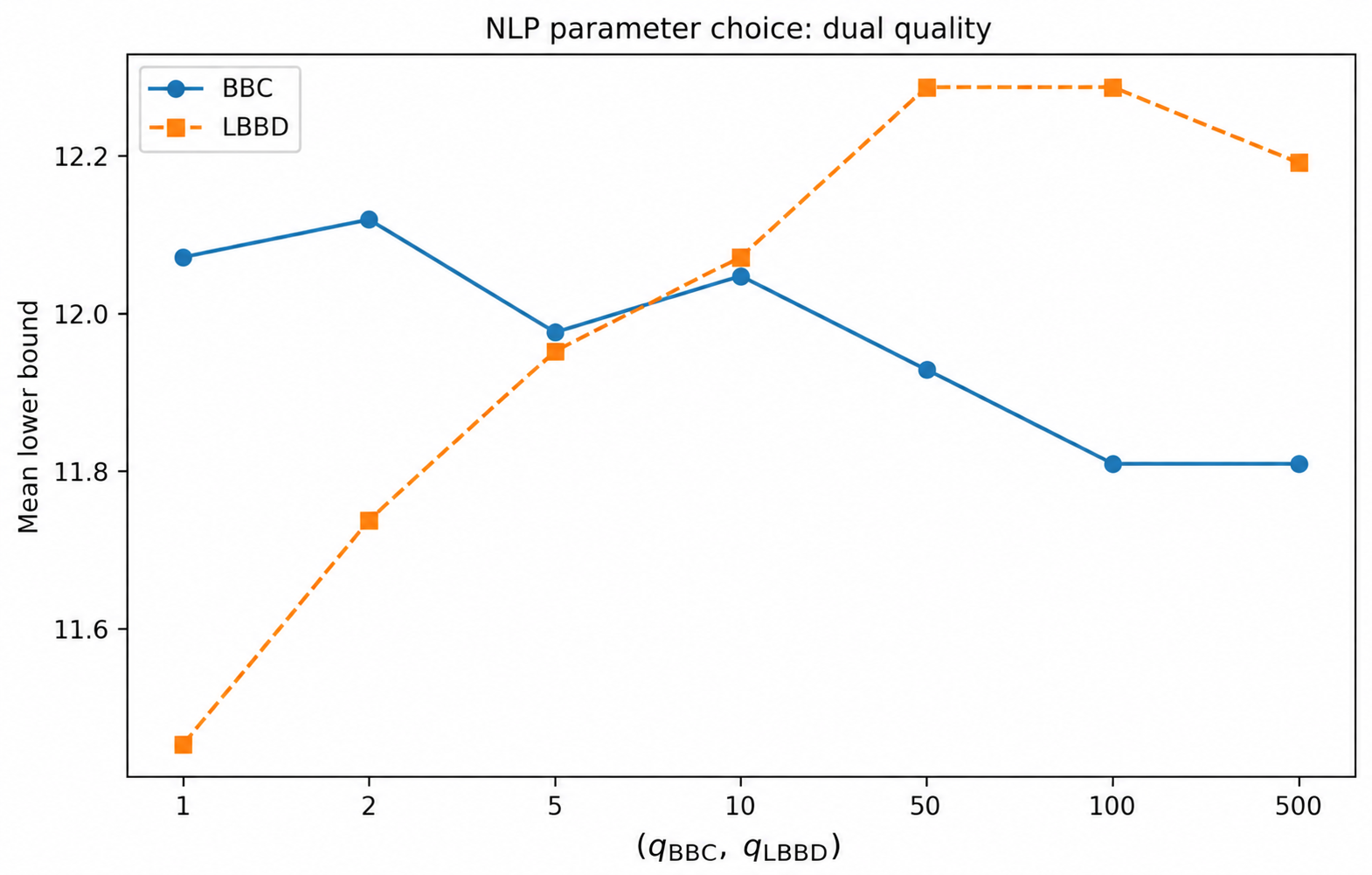}
			\caption{Mean LBs over all 42 NLP instances.}
			\label{fig:tuning-nlp-lb}
		\end{subfigure}
		\hfill
		\begin{subfigure}[t]{0.48\textwidth}
			\centering
			\includegraphics[width=\textwidth]{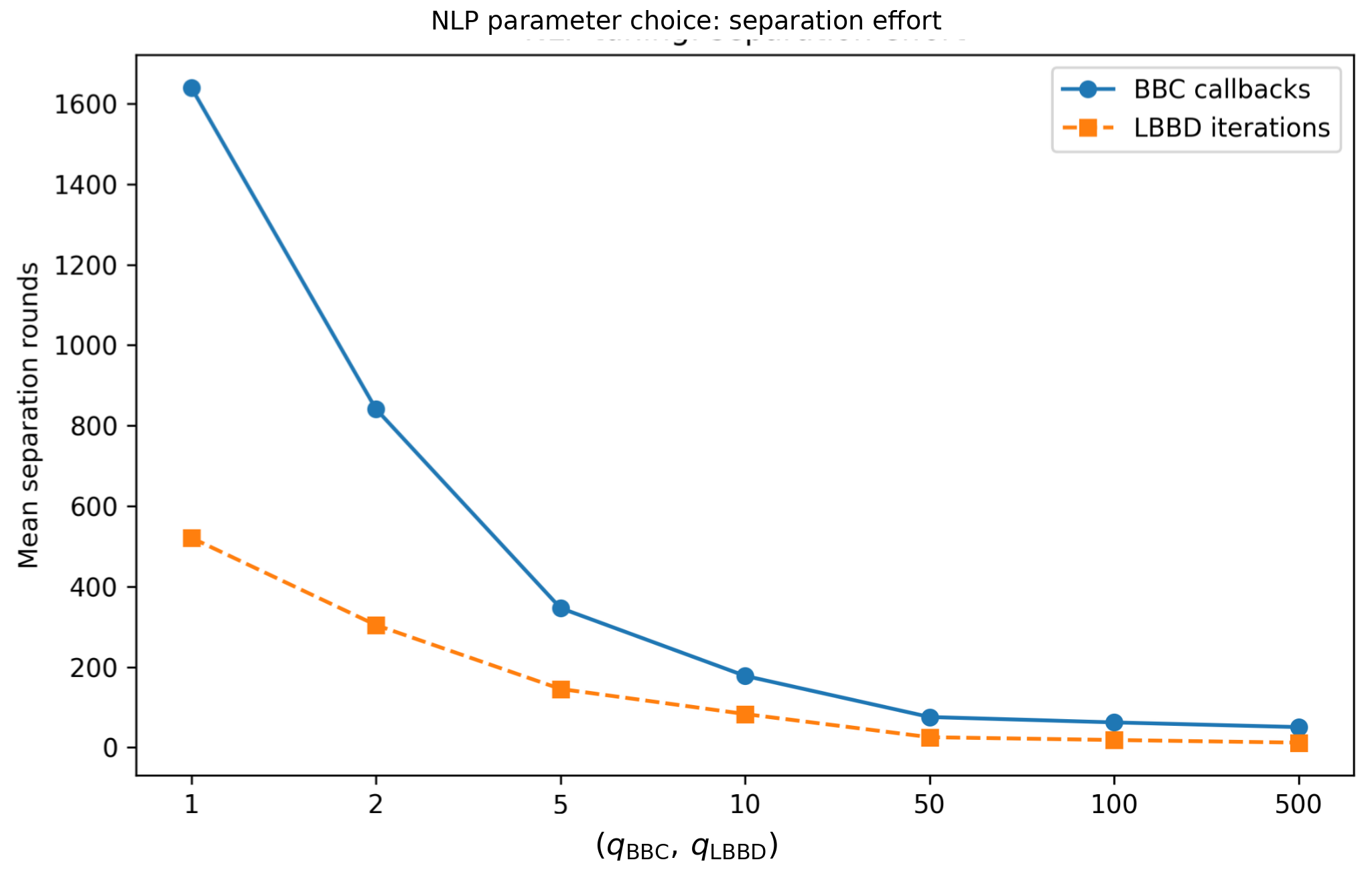}
			\caption{Mean separation effort (\# of iterations) during NLP tuning: \textsc{Bbc} callback invocations and \textsc{Lbbd} master iterations.}
			\label{fig:tuning-nlp-effort}
		\end{subfigure}
		
		\caption{NLP tuning: summary results for \textsc{Bbc} and \textsc{Lbbd} over all 42 instances.}
		\label{fig:tuning-nlp-complete}
	\end{figure}
	
	\subsection{Experimental protocol and coverage}
	\label{sec:nlp-protocol}
	
	%The NLP benchmark contains 42 instances in total: 32 standard instances
	%(ten for each of $q\in\{100,150,200\}$ and two for $q=250$) together with
	%ten $k=2/3$ variants. 
	Following the complete tuning study, BBC is run with
	$q_{\mathrm{BBC}}=2$ and LBBD with $q_{\mathrm{LBBD}}=100$.
	ILP, GRASP, and VNS follow the same protocol used for the \textsc{Random} benchmark.
	All aggregate comparisons below use all 42 instances unless stated otherwise.
	
	\subsection{Overall performance across all approaches}
	\label{sec:nlp-exact-results}
	
	Table~\ref{tab:nlp-overall42} reports the aggregate results over the complete
	42-instance NLP benchmark.
	
	\begin{table}[!htb]
		\centering
		\caption{Overall NLP results on all 42 benchmark instances. Exact methods
			report mean UB, mean LB, mean relative gap, and the number of proven optima.
			\textsc{Grasp} and \textsc{Vns} report the per-instance ten-run mean solution size.}
		\label{tab:nlp-overall42}
		\small
		\begin{tabular}{lrrrr}
			\toprule
			Method & Mean UB & Mean LB & Mean gap (\%) & Optimal \\
			\midrule
			\textsc{Ilp} & \textbf{13.62} & 10.55 & 23.75 & 11/42 \\
			\textsc{Bbc} & 13.90 & \textbf{12.12} & \textbf{13.21} & \textbf{16/42} \\
			\textsc{Lbbd} & 14.26 & 12.29 & 14.38 & 11/42 \\
			\textsc{Grasp} & 13.73 & -- & -- & -- \\
			\textsc{Vns} & 14.30 & -- & -- & -- \\
			\bottomrule
		\end{tabular}
	\end{table}
	
	Across all 42 instances, \textsc{Ilp} remains the strongest exact method on the primal
	side, with mean UB 13.62. \textsc{Bbc} follows with
	13.90, while \textsc{Lbbd} has mean UB
	14.26. In contrast, the decomposition methods
	substantially strengthen the dual side: BBC attains mean LB
	12.12 and \textsc{Lbbd} reaches the strongest mean LB,
	12.29, compared with
	10.55 for \textsc{Ilp}. Consequently, BBC has the
	smallest overall mean optimality gap,
	13.21\%, followed by \textsc{Lbbd} at
	14.38\%, whereas \textsc{Ilp} remains at
	23.75\%.
	
	The certificate comparison is similarly favorable to \textsc{Bbc} : it proves
	16 of the 42 instances optimal,
	compared with 11 for \textsc{Ilp} and \textsc{Lbbd} each. A Friedman test over the
	42 paired exact-method gaps confirms a significant difference
	($\chi^2=42.05$,
	$p=7.39\times 10^{-10}$).
	
	%\begin{figure}[htbp]
	%	\centering
	%	\includegraphics[width=0.65\textwidth]{figures/nlp_all42_gap.png}
	%	\caption{Mean optimality gap of the exact methods over all 42 NLP instances.}
	%	\label{fig:nlp-gap-all42}
	%\end{figure}
	
	%\begin{figure}[htbp]
	%	\centering
	%	\includegraphics[width=0.65\textwidth]{figures/nlp_all42_lb.png}
	%	\caption{Mean certified lower bound of the exact methods over all 42 NLP instances.}
	%	\label{fig:nlp-lb-all42}
	%\end{figure}
	
	Table~\ref{tab:nlp-grouped-results} gives the grouped view over the complete
	42-instance benchmark. The ten $k \in \{2, 3\}$ variants are included in their
	corresponding NLP-$\qns$ groups; consequently, the number of instances per group are $16$ for
	$\qns=100$, $10$ for $\qns=150$, $10$ for $\qns=200$, and $6$ for $\qns=250$.
	
	\begin{table}[htbp]
		\centering
		\caption{Grouped results on all 42 NLP benchmark instances. The $N$ column
			shows the number of instances contributing to each NLP-$\qns$ group. Exact methods
			report mean UB, mean LB, and mean gap; heuristics report ten-run means.}
		\label{tab:nlp-grouped-results}
		\scriptsize
		\begin{tabular}{cr|rrr|rrr|rrr|rr}
			\toprule
			& & \multicolumn{3}{c|}{ILP} & \multicolumn{3}{c|}{BBC} &
			\multicolumn{3}{c|}{LBBD} & \multicolumn{2}{c}{Heuristics}\\
			$\qns$ & $N$ & UB & LB & Gap (\%) & UB & LB & Gap (\%) &
			UB & LB & Gap (\%) & GRASP & VNS\\
			\midrule
			100 & 16 & \textbf{11.81} & 10.39 & 13.12 & 12.00 & \textbf{11.44} & \textbf{4.92} & 12.31 & 11.38 & 7.95 & 11.88 & 12.41 \\
			150 & 10 & {12.80} & 8.60 & 32.89 & 13.30 & 10.60 & \textbf{20.38} & 13.80 & \textbf{10.80} & 21.83 & \textbf{12.75} & 13.49 \\
			200 & 10 & \textbf{13.90} & 8.69 & 37.62 & 14.30 & 11.10 & 22.40 & 14.70 & \textbf{11.60} & \textbf{21.15} & 14.10 & 14.56 \\
			250 & 6 & \textbf{19.33} & 17.33 & 13.74 & \textbf{19.33} & 18.17 & 8.02 & 19.50 & \textbf{18.33} & \textbf{7.78} & 19.68 & 20.28 \\
			\bottomrule
		\end{tabular}
	\end{table}

	\subsection{Primal heuristic behavior}
	\label{sec:nlp-primal-results}
	
	Across all 42 instances, \textsc{Grasp} obtains mean separator size
	13.73, compared with
	14.30 for \textsc{Vns}. At the instance level, \textsc{Grasp} has
	the smaller ten-run mean on 31 instances, \textsc{Vns} on 2, and the methods tie on
	9. The paired Wilcoxon signed-rank test gives
	$p=7.80\times 10^{-7}$.
	Relative to the best exact incumbent on each instance, the average excess is
	0.76\% for \textsc{Grasp} and
	5.21\% for \textsc{Vns}.
	
	%\begin{figure}[htbp]
	%	\centering
	%	\includegraphics[width=0.72\textwidth]{figures/nlp_all42_ub.png}
	%	\caption{Mean primal solution size for all five approaches over the complete
		%		42-instance NLP benchmark.}
	%	\label{fig:nlp-ub-all42}
	%\end{figure}
	
	\subsection{Optimality certificates and dynamic constraint generation}
	\label{sec:nlp-dual-results}
	
	Because MRBSSP has an integer-valued objective, optimality is counted whenever
	$\lceil LB\rceil=UB$. On the complete 42-instance benchmark, \textsc{Bbc} solves
	16 instances to proven optimality, \textsc{Ilp} and \textsc{Lbbd} prove the optimal solutions on 11 instances each.

	The selected cut-batch values also lead to markedly different separation
	dynamics. With $q_{\mathrm{BBC}}=2$, \textsc{Bbc} performs on average
	841.5 callback invocations and adds
	1679 lazy constraints over the 42 instances. With
	$q_{\mathrm{LBBD}}=100$, \textsc{Lbbd} requires only
	18.4 master iterations on average while adding
	1145 cuts. These results indicate the architectural
	difference between the two methods: \textsc{Bbc} benefits from selective cut insertion
	inside one branch-and-bound tree, whereas \textsc{Lbbd} benefits from larger batches
	that reduce repeated master re-optimizations.

	\subsection{Comparison with a general-purpose hitting set solver}
	\label{sec:nlp_pace}
	
	Once the candidate set is fixed, an NLP instance is a hitting set problem~\cite{chandrasekaran2011algorithms,Gainer2017minimal}. 
	The elements are the candidate words, and every human/ChatGPT pair $(r,b)$
	contributes the set $D(r,b)$ of candidates that occur in exactly one of the two texts. A set of words separates all pairs if and only if it meets every
	$D(r,b)$, so a minimum hitting set of this family is a minimum separator and
	any exact hitting set solver must reach the same optimum as \textsc{Ilp}, \textsc{Bbc} and
	\textsc{Lbbd}. This gives a direct way to measure how much the problem-specific
	machinery contributes. We converted every instance into the input format of the
	PACE 2025 challenge, listing all $|R|\,|B|$ sets explicitly and without any
	reduction of our own, and ran UzL~\cite{BannachEtAl2025}, which was ranked
	first in the exact hitting set track of PACE 2025~\cite{GroblerSiebertz2025}. UzL
	first shrinks the hypergraph with four standard reduction rules (a set of
	size one forces its element, an element contained in a single set is
	dropped, a set that contains another set is dropped, and an element whose
	sets all contain a second element is dropped) and then solves a partial
	MaxSAT formulation with EvalMaxSAT~\cite{Avellaneda2020}. It outputs a solution only after proving
	its optimality, so a run that reaches the time limit yields neither a solution
	nor a bound. 
	%% ---- DOPUNA ----
	{The time limit of a UzL job covers the whole job, including reading the input and the reductions, whereas the 3600\,s of the exact methods are counted only after their preprocessing and warm start. To establish a more rigorous comparison,  we therefore ran every instance twice, with job limits of 3600\,s and 7200\,s; the second run gives UzL about as much time for the search itself as the exact methods had.}
	%% ---- kraj dopune ----  %, whereas the 3600\,s of the exact methods start after
	%their preprocessing and warm start. 
	Each UzL job ran as a single process on one core of a compute node of the
	same cluster, which matches the single-threaded setting used for all other
	methods in Section~\ref{sec:experiments}.
	
	%% ---- DOPUNA ----
	{\begin{table}[htbp]
			
			\centering
			\caption{UzL and \textsc{Bbc} on the three groups of NLP instances. $|T|$: number of candidate words;
				$|D|$: smallest and mean size of the sets $D(r,b)$; both are given as ranges over the
				instances of the group. Single: instances with a pair separated by one word. UzL: instances whose optimum
				UzL is able to prove and the time of the shorter of the two runs, which covers the whole job.
				\textsc{Bbc}: instances certified under the one-hour limit and their solver time.}
			\label{tab:uzl-groups}
			\setlength{\tabcolsep}{4pt}
			\footnotesize
			\begin{tabular}{lrrrrrrrrr}
				\toprule
				& & & \multicolumn{2}{c}{$|D|$} & & \multicolumn{2}{c}{UzL} & \multicolumn{2}{c}{\textsc{Bbc}} \\
				\cmidrule(lr){4-5}\cmidrule(lr){7-8}\cmidrule(lr){9-10}
				Group & $N$ & $|T|$ & min & mean & Single & proved & time (s) & proved & time (s) \\
				\midrule
				standard, $k=1$ & 32 & 1\,622--3\,621 & 3--26 & 94--121 & 0 & 0 & -- & 6 & 333--2\,680 \\
				variant, $k=2$  & 5  & 477--1\,141   & 1--5  & 30--36  & 3 & 4 & 73--1\,378 & 5 & 4--20 \\
				variant, $k=3$  & 5  & 222--541      & 1--2  & 16--19  & 3 & 5 & 5--30 & 5 & 0--1 \\
				\bottomrule
			\end{tabular}
		\end{table}
		
		Table~\ref{tab:uzl-groups} shows that the outcome differs between the two kinds of instances in the benchmark.
		UzL proved the optimum on all 10 variants ($k\in\{2,3\}$) and on none of the 32 standard instances ($k=1$); the
		longer time limit did not close a single standard instance. The $k=2$ variants on $\qns=250$ are harder. On
		$(250, 43, 2)$ UzL did not finish within either limit on the cluster, where several jobs shared a compute node.
		A separate run on a workstation with no other load proved the same optimum as \textsc{Bbc} in 4540\,s; that run
		only confirms the value and is not used in the comparison of running times. Every solution UzL returned separates
		all pairs, has the size certified by \textsc{Bbc} and contains all words that the forbid-one-word protocol of
		Section~\ref{sec:nlp-optimal-node-interpretation} places in the core; on $(250, 43, 2)$, whose optimum is unique,
		it returns exactly that dictionary. \textsc{Bbc} closes the same instance in 20\,s of solver time, after about
		2300\,s of preprocessing. Under the one-hour time limit, \textsc{Bbc} certifies 16 of the 42 instances
		(Table~\ref{tab:nlp-overall42}), among them six standard instances on which UzL does not finish even with the
		doubled time limit.
		
		The difference follows from the structure of the sets. On the standard instances the sets $D(r,b)$ are long and
		no pair is separated by fewer than three words, so the kernelization of Section~\ref{sec:ilp}, which uses rules
		of the same kind as UzL (essential vertices, row dominance and column dominance), finds no essential word and
		removes at most 0.7\,\% of the pair sets. On the variants it removes between 13 and 94\,\% of them, and on six of
		the ten a pair is separated by a single word, which the first reduction rule fixes at once, so the instance
		shrinks considerably before any search starts. On the standard instances reductions of this kind can be expected
		to leave the MaxSAT solver with nearly the full family of 10\,800 to 72\,750 long clauses. \textsc{Bbc}, by
		contrast, never needs most of these constraints: on the standard instances it adds as lazy cuts between 3.5 and
		17.6\,\% of the pair constraints that remain after kernelization.}
	%% ---- kraj dopune ----

	\subsection{Structural Interpretation of Optimal Vocabulary Separators}
	\label{sec:nlp-optimal-node-interpretation}
	
	Beyond optimum cardinality, the NLP instances allow us to study the
	\emph{structure of the optimal solution space}. For this structural analysis, we reran \textsc{Bbc} with the tuned setting
	$q_{\mathrm{BBC}}=2$ and a 7200-second time limit to determine whether the longer run would certify additional instances. In this experiment, 19 instances were proved optimal. \\
	
	Let $S^\star$ be one certified
	optimal separator of cardinality $z^\star$. For every $v\in S^\star$, we solve
	the restricted problem with the additional constraint
	\[
	x_v=0,
	\]
	that is, the corresponding word is \textit{forbidden} from the separator. We classify
	the tested word according to the best primal and dual information obtained for
	the restricted problem.
	
	\begin{itemize}
		\item \textbf{Core/mandatory word.}
		If the restricted problem is infeasible, or its certified integer lower
		bound satisfies
		\[
		\lceil LB_{-v}\rceil > z^\star,
		\]
		then no optimum of cardinality $z^\star$ can exclude $v$. Hence $v$ belongs
		to every optimal separator and is part of the \emph{optimal core}.
		Among such words, we distinguish \emph{objective-critical} words, for which
		the restricted optimum is strictly larger than $z^\star$, from
		\emph{feasibility-critical} words, for which the restricted model is
		reported infeasible.
		
		\item \textbf{Replaceable word.}
		If the restricted run finds a feasible separator of size $z^\star$, then
		another optimum exists without $v$. Thus, $v$ occurs in the reference
		optimum but not in every optimum.
		
		\item \textbf{Restrictive--inconclusive word.}
		If no alternative optimum of size $z^\star$ is found, but
		$\lceil LB_{-v}\rceil\le z^\star$, then the run suggests that forbidding
		$v$ is restrictive, but it does not prove that $v$ belongs to the core.
		This category is therefore kept separate from the proven core.
	\end{itemize}
	
	The structural experiment covers all 19 NLP instances for which \textsc{Bbc} proved
	optimality when run with $q_{\mathrm{BBC}}=2$ and a 7200~s time limit.
	For each instance, one word at a time is forbidden from the corresponding
	certified optimum, resulting in 272 restricted \textsc{Bbc} runs. Each run uses a
	single compute node with the same resources as the experiments in
	Section~\ref{sec:experiments}. Of these runs, 242 terminate with an integer optimal solution,
	17 report infeasibility under the forbidden-word restriction, and 13 reach
	the time limit. A run that reaches the time limit could still prove a word mandatory
	if its integer lower bound exceeds $z^\star$, but this
	did not happen in the present runs. Instead, 5 of the 13 time-limit runs
	proved the forbidden word replaceable, because they found a separator of
	size $z^\star$ without it. The remaining 8 runs are classified as restrictive--inconclusive.
	
	Overall, the 272 tested words are classified as
	\[
	\boxed{113\ \text{core/mandatory},\qquad
		151\ \text{replaceable},\qquad
		8\ \text{restrictive--inconclusive}}.
	\]
	Thus, 41.54\% of the tested words are proved to belong to the optimal core,
	55.51\% are proved replaceable, and only 2.94\% remain unresolved. Among the
	113 mandatory words, 96 are objective-critical and 17 are
	feasibility-critical.
	
	Table~\ref{tab:nlp-node-roles} summarizes the instance-level structure. The
	quantity $
	\rho_{\mathrm{cert}}=\frac{|C_{\mathrm{cert}}|}{z^\star}$ is the fraction of the reference optimum already proved to belong to the core.
	When there are no inconclusive words, this is the exact rigidity ratio; when
	inconclusive words remain, it is a certified lower bound on the true rigidity.
	The classification refers to the candidate set after the preprocessing of
	Section~\ref{sec:nlp-motivation}, from which a word is removed when another
	word separates every pair it separates. Restoring the removed words changes
	the status of one core word. On $(250, 42, 3)$ the word \emph{vanderbilt}
	is present in exactly the same texts as \emph{biltmore} and was removed as
	its duplicate, so the two words are interchangeable. Of the other 112 core
	words, only \emph{lolita} on $(100, 43, 2)$ has a removed word that
	separates no pair which \emph{lolita} does not separate, namely
	\emph{film}, and an exact check confirms that \emph{lolita} remains in
	every optimum when the removed words are restored.
	
	\begin{table}[htbp]
		\centering
		\caption{Structural classification of the 272 words belonging to the 19
			certified \textsc{Bbc} optima. ``Core'' denotes words proved mandatory in every
			optimum; ``Repl.'' denotes replaceable words; ``Inc.'' denotes
			restrictive--inconclusive words.}
		\label{tab:nlp-node-roles}
		\scriptsize
		\begin{tabular}{lrrrrr}
			\toprule
			Instance $(\qns, \mathit{seed}, k)$ & $z^\star$ & Core & Repl. & Inc. & $\rho_{\mathrm{cert}}$ \\
			\midrule
			$(100, 42, 2)$ & 12 & 2 & 10 & 0 & 0.17 \\
			$(100, 42, 3)$ & 15 & 13 & 2 & 0 & 0.87 \\
			$(100, 43, 1)$ & 10 & 1 & 9 & 0 & 0.10 \\
			$(100, 43, 2)$ & 14 & 2 & 12 & 0 & 0.14 \\
			$(100, 43, 3)$ & 15 & 7 & 8 & 0 & 0.47 \\
			$(100, 44, 1)$ & 11 & 0 & 11 & 0 & 0.00 \\
			$(100, 44, 2)$ & 11 & 6 & 5 & 0 & 0.55 \\
			$(100, 44, 3)$ & 14 & 6 & 8 & 0 & 0.43 \\
			$(100, 47, 1)$ & 11 & 0 & 11 & 0 & 0.00 \\
			$(100, 48, 1)$ & 11 & 0 & 11 & 0 & 0.00 \\
			$(100, 49, 1)$ & 11 & 5 & 0 & 6 & 0.45 \\
			\boxed{$(100, 50, 1)$} & 10 & 10 & 0 & 0 & 1.00 \\
			$(100, 51, 1)$ & 11 & 0 & 11 & 0 & 0.00 \\
			$(150, 48, 1)$ & 14 & 0 & 13 & 1 & 0.00 \\
			$(200, 48, 1)$ & 15 & 0 & 14 & 1 & 0.00 \\
			$(250, 42, 2)$ & 18 & 5 & 13 & 0 & 0.28 \\
			$(250, 42, 3)$ & 25 & 15 & 10 & 0 & 0.60 \\
			\boxed{$(250, 43, 2)$} & 18 & 18 & 0 & 0 & 1.00 \\
			$(250, 43, 3)$ & 26 & 23 & 3 & 0 & 0.88 \\
			\midrule
			Total & 272 & 113 & 151 & 8 & -- \\
			\bottomrule
		\end{tabular}
	\end{table}
	
	The experiment reveals several qualitatively different types of optimal
	solution structure. Two instances are completely rigid: $(100, 50, 1)$ and $(250, 43, 2)$ have respectively $10/10$ and $18/18$
	reference-optimum words proved mandatory. Since every word of a size-$z^\star$
	optimum belongs to every optimum, the optimal separator is unique in both
	cases. Strong but not complete rigidity is observed for $(100, 42, 3)$ and $(250, 43, 3)$. 
	%\texttt{hc3\_q100\_seed42\_k3} ($13/15$ mandatory) and
	%\texttt{hc3\_q250\_seed43\_k3} ($23/26$ mandatory).
	
	At the opposite extreme, every word in the reference optima of the instances
	$(100, 44, 1)$, $(100, 47, 1)$, $(100, 48, 1)$, and $(100, 51, 1)$ has been
	proved replaceable. Consequently, their optimal cores are empty: no word from
	the reference optimum occurs in every optimal separator. The instances
	$(150, 48, 1)$ and $(200, 48, 1)$ are also highly flexible, with 13 of 14 and
	14 of 15 words proved replaceable, respectively, while one word in each
	instance remains inconclusive.
	
	The most ambiguous instance is $(100, 49, 1)$. Five words are already proved mandatory
	(\texttt{as}, \texttt{error}, \texttt{on}, \texttt{which}, and
	\texttt{with}), while six words remain restrictive--inconclusive. For each
	of these six words, the restricted run finds only a larger incumbent, but its
	integer lower bound does not exclude an alternative optimum of size
	$z^\star=11$.
	
	The two kinds of instances differ clearly. On the nine standard instances
	only 16 of the 104 tested words are in the core, 15 of them on $(100, 49, 1)$ and $(100, 50, 1)$, and six of the
	nine instances have an empty core. On the ten variants 97 of the 168
	tested words are in the core, and all 17 feasibility-critical words occur
	there. On the standard instances every pair is separated by at least three
	words, so an optimum can usually be rearranged without increasing its
	size. The stricter binarization of the variants leaves far fewer
	candidates and some pairs that only a few words separate, and these words
	become indispensable.

	\paragraph{Coverage of the selected words.}
	The classification of Table~\ref{tab:nlp-node-roles} can be read against a
	simple per-word quantity. A word $t$ separates the pair $(r,b)$ if and only
	if it occurs in exactly one of the two texts, so the number of pairs it
	separates is
	\[
	P(t) \;=\; d_R(t)\,\bigl(|B|-d_B(t)\bigr) + \bigl(|R|-d_R(t)\bigr)\,d_B(t),
	\]
	where $d_R(t)$ and $d_B(t)$ are the numbers of human and ChatGPT texts
	containing $t$. We call $P(t)$ the coverage of $t$. Its theoretical maximum
	$|R|\,|B|$ would be attained by a word occurring in every text of one
	class and in none of the other; no such word exists on these instances,
	and the candidates with the highest coverage are words that occur in about
	half of the texts of both classes, each separating close to half of all
	pairs. Figure~\ref{fig:coverage}
	plots the coverage of every candidate word on four instances chosen to
	represent the different solution structures of Table~\ref{tab:nlp-node-roles},
	with the words of the certified optimum $S^\star$ marked by their class.
	
	\begin{figure}[htbp]
		\centering
		\includegraphics[width=\textwidth]{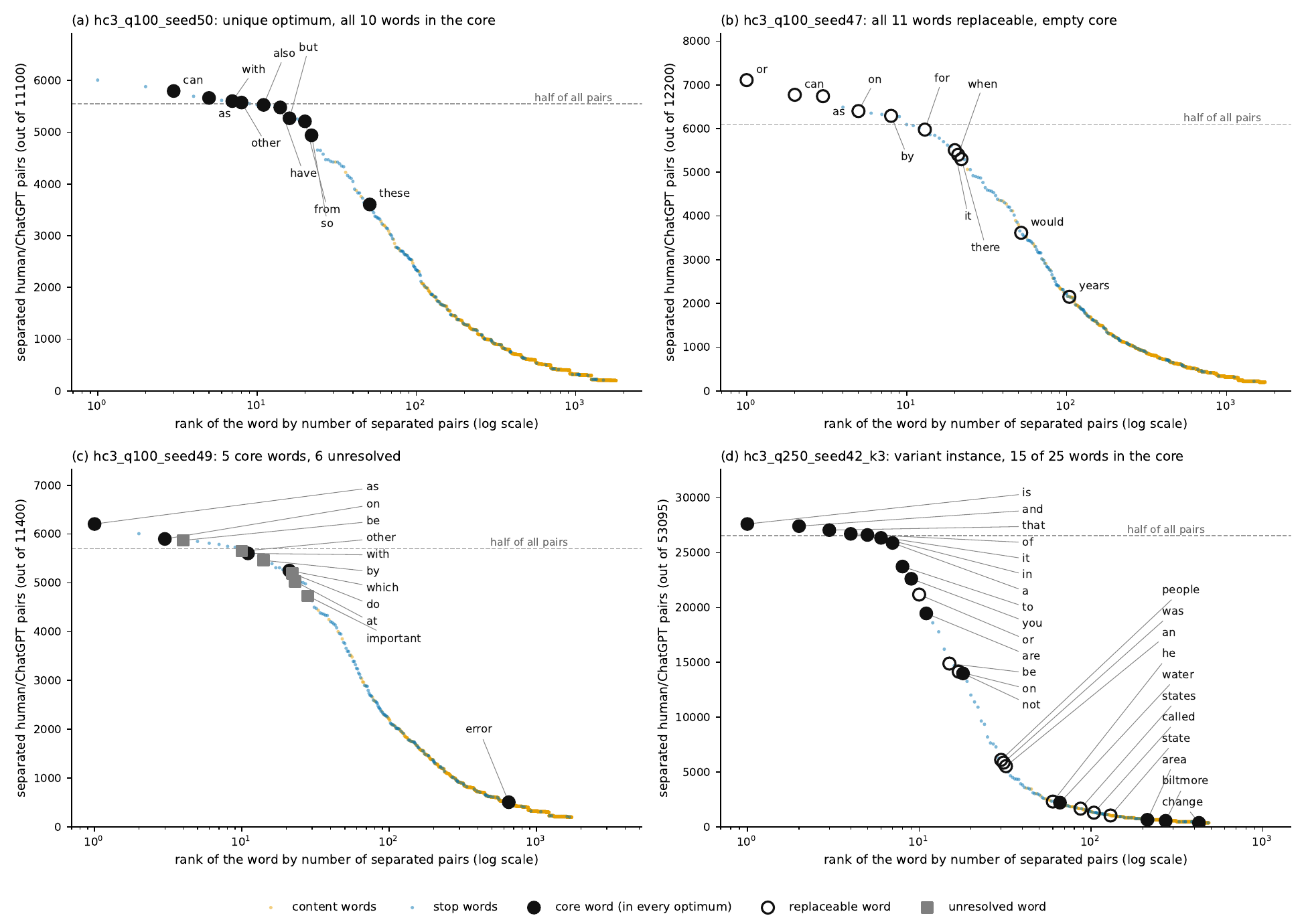}
		\caption{Coverage $P(t)$ of every candidate word, ordered by rank, on four
			NLP instances: (a) the instance $(100, 50, 1)$, whose optimum is unique;
			(b) the instance $(100, 47, 1)$, whose optimum has an empty core;
			(c) the instance $(100, 49, 1)$, with five core words and six unresolved
			words; (d) the instance $(250, 42, 3)$, with 15 of
			25 words in the core. Stop words and content words are drawn in different
			colors; the words of the certified optimum are marked as core (filled),
			replaceable (hollow) or restrictive--inconclusive (grey square). The {dashed horizontal line distinguishes between the words that separate more than half of the human/ChatGPT pairs (above the line) and those that separate fewer.}}
		\label{fig:coverage}
	\end{figure}
	
	Three observations hold on all 19 instances, not only on the four shown. The first is that the optimal separators are built from the
	top of the coverage curve.
	Of the 272 words in the 19 certified optima, 212 are among the 30
	highest-ranked candidates of their instance and 245 are within the top
	10\,\% of the candidates; the median word of an optimum separates 44\,\% of
	all pairs on its own. On the standard instances the top of the curve is
	made of short function words (\emph{or}, \emph{as}, \emph{can}, \emph{on},
	\emph{with}), which occur in roughly half of the texts of both classes, and
	these words make up most of the optimal separators.
	
	The second observation is that the class of a word is not visible from
	its coverage. Core words have a median rank of 11 and
	replaceable words a median rank of 15; both classes separate a median of
	44\,\% of the pairs, and the two profiles overlap completely. The unique
	optimum of $(100, 50, 1)$ in panel (a) and the optimum of $(100, 47, 1)$
	in panel (b), all of whose words are replaceable, occupy the same region
	of the curve. Whether a
	word can be exchanged is therefore decided by \emph{which} pairs it
	separates relative to the other words, not by how many, and this is
	precisely the information that the forbid-one-word certificates add to the
	picture. The third observation concerns the exceptions to the first one, which
	are the words whose contribution no other word provides. The core word \emph{error} on $(100, 49, 1)$ ranks 648th
	of 1\,732 candidates and separates 4\,\% of the pairs; on the instance $(250, 42, 3)$ the core words \emph{area}, \emph{biltmore} (or its duplicate
	\emph{vanderbilt}), \emph{change} and \emph{water} rank between 66th and 425th and separate
	about 1\,\% of the pairs each. These words are mandatory not because they
	separate many pairs but because they separate a few critical pairs that no
	high-coverage word separates; the feasibility-critical words of the
	variants are the extreme case, each being the only separator of some pair.
	The coverage curve thus accounts for the bulk of the optima, while the
	structural classification identifies the few words whose contribution is
	irreplaceable.
	
	% ---------------------------------------------------------------- (B)
	\paragraph{Why the selected words are short.}
	%% ---- DOPUNA ----
	{Function words dominate between the components of optima because the objective minimizes the number of selected words, and on these texts a function word separates far more pairs than a content word.}
	%% ---- kraj dopune ----
	A content
	word that occurs in $d$ texts separates at most $d\max(|R|,|B|)$ pairs,
	a small fraction of the $|R|\,|B|$ pairs when $d$ is small, whereas a
	function word that occurs in about half of the texts of each class
	separates close to half of all pairs; on the instance $(100, 43, 1)$, the median word with more than
	five letters separates fewer than 3\,\% of the pairs. Restricting the
	candidate set makes this explicit. Table~\ref{tab:candidate-restriction}
	reports the size of the separator found by the greedy heuristic described below on the 32
	standard instances when the candidates are restricted to (i) the 143 to
	152 stop words of the instance, (ii) words of at least four, five or six
	letters and (iii) all words except stop words. With stop words alone the
	separator is as small as on the full vocabulary on the 28 instances where
	the stop words admit a separator (median ratio 1.00), so about 6\,\% of
	the vocabulary usually carries the whole separation; on the other four
	instances the stop words do not separate every pair, because some
	human/ChatGPT pair differs only in content words. Excluding stop words
	makes the separator 1.4 to 3.2 times larger (median 1.8), and restricting
	to words of at least six letters leaves no separating set at all on seven
	of the 32 instances. The ratios in the table are taken over the instances
	on which the restricted candidates admit a separator. The one ratio below
	one, 0.93 on the instance $(150, 47, 1)$, where the stop words alone give
	13 words against 14 on the full vocabulary, arises because the greedy
	heuristic is not monotone in the candidate set: a smaller candidate set
	can lead it along a different path to a smaller separator. 
	
	Exact solutions of the restricted problems on the
	standard instances with a certified optimum confirm that this is not a
	weakness of the heuristic: with words of at least five letters the
	optimum is 15 to 18 on the $\qns=100$ instances, 30 on $(150, 48, 1)$ and
	36 on $(200, 48, 1)$, against 10 to 15 on the full vocabulary, while
	stop words alone attain the full-vocabulary optimum on four of these
	instances and are within one word of it on three others. Making content
	words competitive with high-coverage function words would therefore
	require a change of the objective, for instance weighting words by their
	length or their document frequency; we leave this to future work and keep the pair-separation objective, whose solutions are
	what the exact methods certify.
	
	The heuristic used for Table~\ref{tab:candidate-restriction} is the
	classical greedy rule for set covering~\cite{Chvatal1979}, applied to the
	family of sets $D(r,b)$ restricted to the admissible candidates. Starting
	from the empty set, it repeatedly adds the admissible word that separates
	the largest number of pairs not yet separated. These pairs are exactly the
	red--blue pairs whose documents have the same incidence pattern on the
	words chosen so far, so the gain of a word is its coverage $P(t)$ computed
	within each group of such documents and summed over the groups. Ties are
	broken by a fixed order of the words, so the result is deterministic. The
	procedure stops when every pair is separated. If unseparated pairs remain
	but no admissible word separates any of them, the restricted instance has
	no feasible separator, and this is how the column \emph{infeasible} of
	Table~\ref{tab:candidate-restriction} is obtained. The size of the greedy
	separator exceeds the optimum by at most the factor
	$\mathcal{H}(\max_t P(t))\le 1+\ln(|R|\,|B|)$, where $\mathcal{H}(n)$
	denotes the $n$-th harmonic number~\cite{Chvatal1979}. It differs from the
	randomized construction used by GRASP
	in Section~\ref{sec:grasp-construct}, and the same rule is applied to the
	full and to every restricted candidate set, so the ratios in
	Table~\ref{tab:candidate-restriction} compare solutions of the same
	heuristic.
	
	\begin{table}[htbp]
		\centering
		\caption{Size of the greedy separator when the candidate set is restricted,
			relative to the greedy separator on the full vocabulary, over the 32
			standard instances. \emph{share}: mean share of the vocabulary that
			remains a candidate; \emph{infeasible}: instances for which no feasible
			separator exists within the restricted candidate set. The ratios are
			taken over the remaining instances.}
		\label{tab:candidate-restriction}
		\small
		\begin{tabular}{lrrr}
			\toprule
			candidate words & share & ratio min / median / max & infeasible \\
			\midrule
			all words                   & 1.00 & 1.00 / 1.00 / 1.00 & 0 \\
			stop words only             & 0.06 & 0.93 / 1.00 / 1.23 & 4 \\
			at least 4 letters          & 0.92 & 1.08 / 1.25 / 1.69 & 0 \\
			at least 5 letters          & 0.78 & 1.33 / 1.64 / 2.75 & 0 \\
			at least 6 letters          & 0.62 & 1.75 / 2.29 / 3.50 & 7 \\
			all words except stop words & 0.94 & 1.42 / 1.84 / 3.19 & 0 \\
			\bottomrule
		\end{tabular}
	\end{table}

	These results show that membership in one minimum separator is not, by itself,
	a sufficient measure of feature importance. A \emph{replaceable} word can be
	removed while preserving the optimum cardinality, whereas a
	\emph{core/mandatory} word is structurally unavoidable across the complete
	family of optimal separators. The latter therefore provides a stronger form
	of vocabulary importance. Conversely, the restrictive--inconclusive class
	identifies words for which the current computation provides evidence of
	importance but not a proof of core membership. This distinction is useful for
	interpreting the NLP model because it separates genuinely unavoidable
	discriminative vocabulary from alternative, mutually substitutable optimal
	explanations.
	
	%\subsection{NLP benchmark conclusions}
	%\label{sec:nlp-conclusions}
	
	%The complete 42-instance tuning study selects
	%$q_{\mathrm{BBC}}=2$ and $q_{\mathrm{LBBD}}=100$. The newly added
	%$q=500$ experiments confirm that neither algorithm benefits from the most
	%aggressive cut batch: for BBC the mean gap rises to
	%16.26\%, while for %LBBD it rises to 15.23\%.
	%On the final 42-instance benchmark, ILP remains the strongest primal exact
	%method, BBC achieves the smallest mean gap
	%(13.21\%) and the largest number of
	%certificates (16), while LBBD provides
	%the strongest mean lower bound (12.29).
	%GRASP remains the stronger repeated-run heuristic baseline overall, with mean
	%solution size 13.73 versus
	%14.30 for VNS. The results therefore reinforce
	%the complementary roles of the approaches: ILP is strongest on primal quality,
	%BBC on final gap and certification, and LBBD on dual-bound strength.
	
	\section{Conclusions}\label{sec:conclusions}
	
	This paper develops and experimentally evaluates a systematic exact and hybrid computational
	framework for the Minimum Red--Blue Separating Set Problem (MRBSSP). The
	framework combines a compact Set-Cover ILP formulation, safe preprocessing with kernelization,
	an iterative Logic-Based Benders Decomposition (LBBD), and a
	Branch-and-Benders-Cut (BBC) algorithm. GRASP approach is used both as a standalone
	heuristic and as a source of validated warm starts and upper bounds for the
	exact approaches, while the VNS approach provides an additional literature-based heuristic
	baseline. The computational study is deliberately organized around two
	complementary benchmark families: a broad \textsc{Random} graph benchmark for
	controlled algorithmic stress testing and an NLP-derived benchmark that
	connects MRBSSP with interpretable vocabulary selection for distinguishing
	human- and AI-generated texts.
	
	The \textsc{Random} benchmark contains 250 instances arranged across graph sizes and
	densities, of which 200 are feasible. This benchmark exposes clear
	complementarity among the exact approaches. The best parameter choice of the BBC approach adds at most $q_{\mathrm{BBC}}=5$ lazy cuts per callback and LBBD at most $q_{\mathrm{LBBD}}=100$ cuts per master iteration. ILP gives the
	strongest primal performance, with mean a UB of 18.83. BBC proves the largest
	number of instances optimal, 92 out of 200, whereas LBBD gives the strongest
	mean lower bound, 13.63, and the smallest mean optimality gap, 24.37\%.
	On the 11 groups not solved completely by all exact methods, the mean group
	gaps are 53.71\% for ILP, 47.21\% for BBC, and 44.30\% for LBBD. The
	repeated-run heuristic study additionally shows that both meta-heuristic approaches, GRASP and VNS, 
	produce strong feasible solutions with low run-to-run variability. Thus, the
	\textsc{Random} benchmark provides a broad view of the trade-off between primal
	quality, dual-bound progress, and certification strength.
	
	The NLP benchmark provides a second and substantially different test of the
	same algorithms. The complete parameter study over
	$q_{\text{LBBD}}, q_{\text{BBC}} \in\{1,2,5,10,50,100,500\}$ selects
	$q_{\mathrm{BBC}}=2$ and $q_{\mathrm{LBBD}}=100$. Across all 42 NLP
	instances, ILP remains the strongest primal exact method, with a mean upper
	bound of 13.62. BBC attains the smallest mean optimality gap (13.21\%) and
	proves the largest number of optima (16), while LBBD provides the
	strongest mean lower bound of 12.29. The
	$q_{\text{LBBD}}=500$ experiments confirm that increasing cut batches beyond the selected
	values does not improve the final trade-off.
	
	The NLP benchmark also demonstrates that the value of an exact solution is
	not limited to its objective cardinality. By forbidding vertices from
	certified optimal separators one at a time, we distinguish mandatory
	vocabulary features from replaceable ones. The structural experiment covers
	the 19 NLP instances for which BBC certifies optimality under the extended
	computational limit, resulting in 272 restricted BBC runs. Overall, 113 of
	the tested vocabulary features are proved to be core/mandatory, 151 are
	proved replaceable, and only 8 remain restrictive--inconclusive. Thus,
	approximately 41.54\% of the tested words are certified as belonging to the
	optimal core, whereas 55.51\% are replaceable and 2.94\% remain unresolved.
	Among the 113 mandatory words, 96 are objective-critical and 17 are
	feasibility-critical. Some instances admit highly non-unique optimal
	vocabularies, including cases with an empty certified core, whereas others
	exhibit completely or almost completely rigid optimal separators. This
	structural analysis provides a stronger interpretation of feature importance
	than reporting a single minimum separator, since it distinguishes features
	that are unavoidable across all optimal solutions from those that can be
	substituted without increasing the objective value.
	
	Taken together, the two benchmark families provide complementary evidence.
	The study on the \textsc{Random} benchmark emphasizes robustness across graph size and density, whereas
	the NLP study demonstrates that the same exact and hybrid methodology remains
	effective on structured real-data graphs and can additionally support an
	interpretable analysis of the complete optimal-solution structure. Across
	both benchmarks, no single exact method dominates every computational
	criterion: ILP is generally strongest on primal solution quality, LBBD is
	particularly effective in producing stronger dual bounds, and BBC is highly
	competitive in optimality certification. The heuristic approaches complement
	the exact methods by supplying high-quality feasible solutions and, in the
	case of GRASP, useful warm starts and upper bounds for the exact search.
	
	Future work should enlarge both benchmark families, especially by adding more
	structurally diverse real-data instances, and should investigate stronger
	problem-specific reductions, adaptive cut-batch policies for BBC and LBBD,
	and alternative cut-ranking strategies. On the application side, natural
	extensions include weighted or cost-sensitive vocabulary separators,
	robustness to perturbations of the text-feature incidence matrix, and
	multi-class generalizations in which several text-generation sources must be
	distinguished simultaneously. A further direction is to exploit the
	mandatory/replaceable feature analysis directly within the optimization
	process, for example to identify persistent core features across multiple
	optimal or near-optimal separators.
	
	\section*{Acknowledgements}
	This publication is co-funded by the European Union's Horizon Europe research and innovation programme under the Marie Sk\l{}odowska-Curie COFUND Postdoctoral Programme, grant agreement No.~101081355--SMASH, and by the Republic of Slovenia and the European Union from the European Regional Development Fund. Co-funded by the European Union. Views and opinions expressed are those of the authors only and do not necessarily reflect those of the European Union or the European Research Executive Agency (REA). Neither the European Union nor the REA can be held responsible for them.

	\appendix
	
	\section{Numerical results of $q_{\texttt{BBC}}$ and $q_{\texttt{LBBD}}$ parameter tuning on the benchmark set \textsc{Random}}\label{sec:appendix-random-tuning}
	
	\begin{table}[!htb]
		\centering
		\caption{\textsc{Bbc} approach tuned on 20  selected \textsc{Random} instances.}
		\label{tab:tuning-random-bbc}
		\small
		\begin{tabular}{rrrrr}
			\toprule
			$q_{\mathrm{BBC}}$ & Mean UB & Mean LB & Mean gap (\%) & Opt.\\
			\midrule
			1 & 19.15 & 13.00 & 26.17 & 9 \\
			2 & 19.10 & 13.05 & 26.26 & 9 \\
			\fbox{{5}} & 19.20 & 13.15 & \textbf{25.39} & \textbf{10} \\
			10 & 19.05 & 13.05 & 26.41 & 9 \\
			50 & 19.00 & 13.00 & 26.35 & 9 \\
			100 & 19.20 & 13.00 & 27.10 & 9 \\
			500 & 19.25 & 13.05 & 26.16 & 9 \\
			\bottomrule
		\end{tabular}
	\end{table}
	
	\begin{table}[!htb]
		\centering
		\caption{\textsc{Lbbd} approach tuned on  20 selected \textsc{Random} instances.}
		\label{tab:tuning-random-lbbd}
		\small
		\begin{tabular}{rrrrr}
			\toprule
			$q_{\mathrm{LBBD}}$ & Mean UB & Mean LB & Mean gap (\%) & Opt.\\
			\midrule
			1 & 20.40 & 13.15 & 27.37 & 8 \\
			2 & 20.45 & 13.30 & 26.50 & 8 \\
			5 & 20.10 & 13.45 & 25.61 & 8 \\
			10 & 20.45 & 13.50 & 26.30 & 8 \\
			50 & 20.35 & 13.65 & 25.51 & 8 \\
			\fbox{100} & 20.35 & 13.55 & \textbf{24.85} & \textbf{9} \\
			500 & 20.15 & 13.45 & 25.86 & 8 \\
			\bottomrule
		\end{tabular}
	\end{table}
	
	\section{Numerical results of $q_{\texttt{BBC}}$ and $q_{\texttt{LBBD}}$ parameter tuning on the NLP problem instances}\label{sec:appendix-nlp-tuning} \vspace{1cm}
	
	\begin{table}[!htb]
		\centering
		\caption{\textsc{Bbc} tuned on all 42 NLP benchmark instances. ``Rounds'' denotes the
			mean number of callback invocations. The selected setting is shown in bold.}
		\label{tab:tuning-nlp-bbc}
		\small
		\begin{tabular}{rrrrrr}
			\toprule
			$q_{\mathrm{BBC}}$ & Mean UB & Mean LB & Mean gap (\%) & Opt. & Rounds\\
			\midrule
			1 & 13.95 & 12.07 & 13.78 & 15 & 1640.0 \\
			\fbox{2} & \textbf{13.90} & \textbf{12.12} & \textbf{13.21} & \textbf{16} & \textbf{841.5} \\
			5 & 13.86 & 11.98 & 13.99 & 15 & 346.6 \\
			10 & 13.86 & 12.05 & 13.29 & 16 & 178.0 \\
			50 & 14.00 & 11.93 & 15.37 & 12 & 75.6 \\
			100 & 13.98 & 11.81 & 16.25 & 11 & 62.2 \\
			500 & 13.98 & 11.81 & 16.26 & 12 & 50.5 \\
			\bottomrule
		\end{tabular}
	\end{table}

	\begin{table}[!htb]
		\centering
		\caption{\textsc{Lbbd} tuned on all 42 NLP benchmark instances. ``Rounds'' denotes the
			mean number of master iterations. The selected setting is shown in bold.}
		\label{tab:tuning-nlp-lbbd}
		\small
		\begin{tabular}{rrrrrr}
			\toprule
			$q_{\mathrm{LBBD}}$ & Mean UB & Mean LB & Mean gap (\%) & Opt. & Rounds\\
			\midrule
			1 & 14.17 & 11.45 & 20.12 & 10 & 521.2 \\
			2 & 14.29 & 11.74 & 18.61 & 10 & 304.4 \\
			5 & 14.29 & 11.95 & 17.21 & 10 & 145.4 \\
			10 & 14.17 & 12.07 & 15.53 & 10 & 83.0 \\
			50 & 14.21 & {12.29} & 14.20 & 10 & 25.4 \\
			\fbox{100} & \textbf{14.26} & \textbf{12.29} & \textbf{14.38} & \textbf{11} & \textbf{18.4} \\
			500 & 14.29 & 12.19 & 15.23 & 10 & 11.8 \\
			\bottomrule
		\end{tabular}
	\end{table}

	\section{Word-level classification of words in the optimal solutions of NLP instances solved by \textsc{Bbc}}
	
	The following lists give the status of every tested word. Node IDs are shown
	in parentheses. A dagger ($^\dagger$) marks a feasibility-critical core word;
	all other core words are objective-critical.
	
	\paragraph{$(100, 42, 2)$ ($z^\star=12$).}
	\textbf{Core/mandatory:} \texttt{more} (471), \texttt{on} (495).\\
	\textbf{Replaceable:} \texttt{are} (240), \texttt{as} (243), \texttt{be} (250), \texttt{help} (390), \texttt{if} (402), \texttt{in} (406), \texttt{is} (427), \texttt{of} (491), \texttt{will} (678), \texttt{your} (685).\\
	\textbf{Restrictive--inconclusive:} --.
	
	\paragraph{$(100, 42, 3)$ ($z^\star=15$).}
	\textbf{Core/mandatory:} \texttt{a} (191), \texttt{are} (207), \texttt{as} (209), \texttt{be} (211), \texttt{in} (276), \texttt{is} (288), \texttt{it} (289), \texttt{of} (316), \texttt{on} (318), \texttt{or} (321), \texttt{that} (375), \texttt{to} (386), \texttt{you} (415).\\
	\textbf{Replaceable:} \texttt{create} (230), \texttt{when} (406).\\
	\textbf{Restrictive--inconclusive:} --.
	
	\paragraph{$(100, 43, 1)$ ($z^\star=10$).}
	\textbf{Core/mandatory:} \texttt{or} (1380).\\
	\textbf{Replaceable:} \texttt{also} (315), \texttt{as} (355), \texttt{between} (423), \texttt{but} (463), \texttt{from} (927), \texttt{important} (1056), \texttt{like} (1178), \texttt{when} (1999), \texttt{you} (2037).\\
	\textbf{Restrictive--inconclusive:} --.
	
	\paragraph{$(100, 43, 2)$ ($z^\star=14$).}
	\textbf{Core/mandatory:} \texttt{i} (460)$^\dagger$, \texttt{lolita} (510)$^\dagger$.\\
	\textbf{Replaceable:} \texttt{and} (235), \texttt{as} (244), \texttt{be} (259), \texttt{by} (277), \texttt{can} (279), \texttt{for} (407), \texttt{foreign} (409), \texttt{it} (485), \texttt{or} (567), \texttt{that} (694), \texttt{which} (752), \texttt{you} (772).\\
	\textbf{Restrictive--inconclusive:} --.
	
	\paragraph{$(100, 43, 3)$ ($z^\star=15$).}
	\textbf{Core/mandatory:} \texttt{and} (203), \texttt{for} (274), \texttt{from} (277), \texttt{of} (351), \texttt{that} (412), \texttt{to} (426), \texttt{you} (455).\\
	\textbf{Replaceable:} \texttt{a} (194), \texttt{guidance} (292), \texttt{in} (306), \texttt{more} (340), \texttt{not} (347), \texttt{on} (352), \texttt{the} (413), \texttt{was} (439).\\
	\textbf{Restrictive--inconclusive:} --.
	
	\paragraph{$(100, 44, 1)$ ($z^\star=11$).}
	\textbf{Core/mandatory:} --.\\
	\textbf{Replaceable:} \texttt{for} (833), \texttt{have} (915), \texttt{how} (959), \texttt{important} (978), \texttt{in} (981), \texttt{more} (1190), \texttt{on} (1264), \texttt{there} (1734), \texttt{they} (1737), \texttt{was} (1840), \texttt{with} (1873).\\
	\textbf{Restrictive--inconclusive:} --.
	
	\paragraph{$(100, 44, 2)$ ($z^\star=11$).}
	\textbf{Core/mandatory:} \texttt{as} (233), \texttt{not} (499), \texttt{on} (508), \texttt{that} (616), \texttt{with} (682), \texttt{you} (689).\\
	\textbf{Replaceable:} \texttt{are} (231), \texttt{be} (246), \texttt{is} (435), \texttt{their} (618), \texttt{to} (632).\\
	\textbf{Restrictive--inconclusive:} --.
	
	\paragraph{$(100, 44, 3)$ ($z^\star=14$).}
	\textbf{Core/mandatory:} \texttt{about} (185)$^\dagger$, \texttt{and} (192), \texttt{in} (268), \texttt{is} (275), \texttt{it} (276), \texttt{to} (371).\\
	\textbf{Replaceable:} \texttt{a} (184), \texttt{for} (246), \texttt{if} (266), \texttt{of} (308), \texttt{on} (311), \texttt{that} (360), \texttt{their} (362), \texttt{you} (402).\\
	\textbf{Restrictive--inconclusive:} --.
	
	\paragraph{$(100, 47, 1)$ ($z^\star=11$).}
	\textbf{Core/mandatory:} --.\\
	\textbf{Replaceable:} \texttt{as} (358), \texttt{by} (465), \texttt{can} (476), \texttt{for} (890), \texttt{it} (1091), \texttt{on} (1318), \texttt{or} (1332), \texttt{there} (1786), \texttt{when} (1915), \texttt{would} (1944), \texttt{years} (1950).\\
	\textbf{Restrictive--inconclusive:} --.
	
	\paragraph{$(100, 48, 1)$ ($z^\star=11$).}
	\textbf{Core/mandatory:} --.\\
	\textbf{Replaceable:} \texttt{about} (228), \texttt{if} (978), \texttt{make} (1125), \texttt{not} (1228), \texttt{on} (1253), \texttt{or} (1272), \texttt{other} (1278), \texttt{so} (1581), \texttt{this} (1693), \texttt{use} (1768), \texttt{with} (1834).\\
	\textbf{Restrictive--inconclusive:} --.
	
	\paragraph{$(100, 49, 1)$ ($z^\star=11$).}
	\textbf{Core/mandatory:} \texttt{as} (344), \texttt{error} (745), \texttt{on} (1278), \texttt{which} (1907), \texttt{with} (1918).\\
	\textbf{Replaceable:} --.\\
	\textbf{Restrictive--inconclusive:} \texttt{at} (351), \texttt{be} (377), \texttt{by} (441), \texttt{do} (678), \texttt{important} (983), \texttt{other} (1301).
	
	\paragraph{$(100, 50, 1)$ ($z^\star=10$).}
	\textbf{Core/mandatory:} \texttt{also} (292), \texttt{as} (340), \texttt{but} (452), \texttt{can} (465), \texttt{from} (899), \texttt{have} (967), \texttt{other} (1350), \texttt{so} (1697), \texttt{these} (1823), \texttt{with} (1972).\\
	\textbf{Replaceable:} --.\\
	\textbf{Restrictive--inconclusive:} --.
	
	\paragraph{$(100, 51, 1)$ ($z^\star=11$).}
	\textbf{Core/mandatory:} --.\\
	\textbf{Replaceable:} \texttt{and} (330), \texttt{as} (357), \texttt{at} (365), \texttt{but} (455), \texttt{can} (467), \texttt{for} (855), \texttt{no} (1213), \texttt{on} (1240), \texttt{other} (1257), \texttt{this} (1694), \texttt{when} (1807).\\
	\textbf{Restrictive--inconclusive:} --.
	
	\paragraph{$(150, 48, 1)$ ($z^\star=14$).}
	\textbf{Core/mandatory:} --.\\
	\textbf{Replaceable:} \texttt{an} (448), \texttt{are} (486), \texttt{for} (1218), \texttt{has} (1336), \texttt{i} (1399), \texttt{in} (1414), \texttt{it} (1491), \texttt{like} (1563), \texttt{of} (1779), \texttt{on} (1790), \texttt{their} (2439), \texttt{which} (2628), \texttt{with} (2646).\\
	\textbf{Restrictive--inconclusive:} \texttt{you} (2671).
	
	\paragraph{$(200, 48, 1)$ ($z^\star=15$).}
	\textbf{Core/mandatory:} --.\\
	\textbf{Replaceable:} \texttt{also} (573), \texttt{an} (591), \texttt{as} (655), \texttt{for} (1640), \texttt{important} (1887), \texttt{not} (2334), \texttt{of} (2368), \texttt{on} (2384), \texttt{that} (3238), \texttt{there} (3247), \texttt{they} (3250), \texttt{with} (3498), \texttt{you} (3527), \texttt{your} (3530).\\
	\textbf{Restrictive--inconclusive:} \texttt{in} (1891).
	
	\paragraph{$(250, 42, 2)$ ($z^\star=18$).}
	\textbf{Core/mandatory:} \texttt{by} (656), \texttt{i} (979)$^\dagger$, \texttt{in} (991), \texttt{one} (1189)$^\dagger$, \texttt{you} (1600).\\
	\textbf{Replaceable:} \texttt{a} (532), \texttt{and} (574), \texttt{as} (588), \texttt{for} (881), \texttt{have} (945), \texttt{is} (1028), \texttt{it} (1030), \texttt{not} (1170), \texttt{on} (1187), \texttt{or} (1194), \texttt{that} (1477), \texttt{there} (1484), \texttt{to} (1500).\\
	\textbf{Restrictive--inconclusive:} --.
	
	\paragraph{$(250, 42, 3)$ ($z^\star=25$).}
	\textbf{Core/mandatory:} \texttt{a} (476), \texttt{and} (490), \texttt{are} (495), \texttt{area} (496)$^\dagger$, \texttt{biltmore} (520)$^\dagger$, \texttt{change} (543)$^\dagger$, \texttt{in} (683), \texttt{is} (704)$^\dagger$, \texttt{it} (705), \texttt{not} (762), \texttt{of} (767)$^\dagger$, \texttt{that} (896), \texttt{to} (912)$^\dagger$, \texttt{water} (936), \texttt{you} (956).\\
	\textbf{Replaceable:} \texttt{an} (489), \texttt{be} (509), \texttt{called} (533), \texttt{he} (660), \texttt{on} (769), \texttt{or} (773), \texttt{people} (790), \texttt{state} (876), \texttt{states} (877), \texttt{was} (934).\\
	\textbf{Restrictive--inconclusive:} --.
	
	\paragraph{$(250, 43, 2)$ ($z^\star=18$).}
	\textbf{Core/mandatory:} \texttt{can} (666), \texttt{for} (933), \texttt{have} (1006), \texttt{i} (1045)$^\dagger$, \texttt{in} (1058), \texttt{is} (1094), \texttt{it} (1098), \texttt{lolita} (1149)$^\dagger$, \texttt{more} (1206), \texttt{not} (1238), \texttt{on} (1256), \texttt{or} (1264), \texttt{that} (1530), \texttt{they} (1539), \texttt{to} (1551), \texttt{was} (1610), \texttt{which} (1627), \texttt{you} (1651).\\
	\textbf{Replaceable:} --.\\
	\textbf{Restrictive--inconclusive:} --.
	
	\paragraph{$(250, 43, 3)$ ($z^\star=26$).}
	\textbf{Core/mandatory:} \texttt{a} (488), \texttt{an} (502), \texttt{and} (503)$^\dagger$, \texttt{as} (514), \texttt{film} (649), \texttt{for} (660), \texttt{has} (692), \texttt{his} (706), \texttt{i} (713)$^\dagger$, \texttt{in} (720), \texttt{is} (734), \texttt{it} (735), \texttt{not} (810), \texttt{of} (819), \texttt{penguins} (845)$^\dagger$, \texttt{star} (930), \texttt{that} (954), \texttt{the} (955)$^\dagger$, \texttt{they} (962), \texttt{to} (969), \texttt{water} (994), \texttt{you} (1014), \texttt{your} (1015).\\
	\textbf{Replaceable:} \texttt{banks} (524), \texttt{guidance} (688), \texttt{was} (993).\\
	\textbf{Restrictive--inconclusive:} --.

\end{document}